\documentclass[lettersize,journal]{IEEEtran}
\usepackage{amsmath,amsfonts}
\usepackage{algorithmic}
\usepackage{algorithm}
\usepackage{array}
\usepackage[caption=false,font=normalsize,labelfont=sf,textfont=sf]{subfig}
\usepackage{textcomp}
\usepackage{stfloats}
\usepackage{url}
\usepackage{verbatim}
\usepackage{graphicx}
\usepackage{cite}
\usepackage{hyperref}
\usepackage{xcolor}

\usepackage{multirow}
\usepackage{booktabs}
\usepackage{amsthm}
\newtheorem{proposition}{Proposition}

\begin{document}
\bstctlcite{IEEEexample:BSTcontrol}
\title{Coefficient Calibration as Selection Pressure in Symbolic Regression}

\author{Mattia~Billa,
        Veronica~Guidetti,
        and~Federica~Mandreoli%
\thanks{We acknowledge the CINECA award under the ISCRA initiative, for the availability of high performance computing resources and support.}
\thanks{M. Billa, V. Guidetti, and F. Mandreoli are with the Department of Physics, Informatics and Mathematics, University of Modena and Reggio Emilia, 41125 Modena, Italy (e-mail: mattia.billa@unimore.it;
veronica.guidetti@unimore.it; federica.mandreoli@unimore.it).}%
}

%



\maketitle

\begin{abstract}
In memetic symbolic regression, candidate structures are compared after coefficient calibration, so the calibration protocol itself contributes to evolutionary selection. Standard centralized calibration evaluates each structure at its pooled-sample optimum, ignoring how stable this calibration is under covariate shifts, and can thus favor structures whose fit relies on sample-specific coefficients.

We propose Dirichlet-Sinkhorn Constant Averaging (DSCA), a calibration strategy that partitions the optimization data into equally sized subsets with different covariate distributions, calibrates each candidate independently on every partition, and evaluates it at the mean of the resulting parameters. We show that the excess loss of DSCA relative to centralized calibration vanishes at the population level for correctly specified, identifiable expressions, whereas under misspecification it persists when partition-specific calibrations do not aggregate to the pooled optimum.

On synthetic benchmarks and ten real-world datasets, DSCA improves functional recovery and the accuracy-complexity trade-off over centralized Broyden–Fletcher–Goldfarb–Shanno and Levenberg-Marquardt calibration, under selection by negative log-likelihood and by the Akaike and Bayesian information criteria. Mechanism analyses associate the DSCA excess loss with the generalization gap and show that the effect is not reproduced by repeated centralized fitting. These results indicate that controlled heterogeneous calibration provides a complementary source of selection pressure in symbolic-regression search.
\end{abstract}

\begin{IEEEkeywords}
Genetic Programming, Symbolic Regression, Optimization
\end{IEEEkeywords}

\section{Introduction}
Symbolic regression (SR) aims to discover mathematical models that capture the underlying relationships within data, prioritizing both predictive accuracy and structural interpretability \cite{kronberger2024symbolic, makke2024interpretable}. Unlike traditional regression, which optimizes parameters for a predefined architecture, SR involves a bi-level optimization task. The algorithm must simultaneously search the discrete space of algebraic structures and the continuous space of numerical coefficients. Genetic programming represents the most prominent framework for this task, utilizing evolutionary mechanisms to navigate the combinatorial space of mathematical expressions \cite{de2024srbench++}.

Within this bi-level framework, the identification of optimal numerical constants acts as a critical bottleneck. Historically, genetic programming relied on evolutionary variation to adapt constants, a process that is inefficient and often results in premature convergence or structural bloat \cite{koza1994, sobania2022}. To alleviate this, modern approaches frequently hybridize the evolutionary search with continuous local optimizers, such as the Levenberg-Marquardt or Broyden-Fletcher-Goldfarb-Shanno algorithms \cite{topchy2001,kommenda2020, cranmer2023}.

Accurate coefficient calibration substantially improves the evaluation of a fixed candidate structure, but it also affects the outer structural search. In a memetic SR system, candidate expressions are compared only after their numerical parameters have been adapted to the observed training sample. Consequently, flexible or misspecified structures may become competitive by exploiting sample-specific variation through finely tuned coefficients. This effect is particularly relevant in SR, where most structures visited during evolution are necessarily imperfect approximations of the underlying relationship and coefficient estimation is frequently affected by ill-conditioning \cite{kronberger2022,virgolin2023}.

The resulting generalization-complexity trade-off is typically addressed at the selection level, through explicit parsimony pressure or information- and description-based criteria \cite{soltani2026comparative}. Such mechanisms determine how candidate expressions are compared after their numerical coefficients have already been calibrated on the observed sample. Their penalized likelihood structure usually depends linearly on the number of parameters used by the model, a structure which is not suitable for misspecified models typical of SR and may not be sufficient to guide evolution toward the right accuracy-complexity trade-off \cite{watanabe2010asymptotic,vehtari2017practical}.

Here we investigate a complementary intervention: we modify the coefficient-calibration stage through which an individual receives the parameter values used for its fitness evaluation. The objective is therefore not to improve the training optimum itself, but to make the evaluation of candidate structures less favorable to solutions whose empirical performance relies on highly sample-specific coefficient calibration.

To achieve this, we draw on ideas from distributed and federated parameter estimation, where local models are fitted on distinct data sources and subsequently aggregated into a common parameterization \cite{mcmahan2017}. In those settings, statistical heterogeneity is generally a constraint that the learning procedure must accommodate. In our centralized setting, we instead deliberately induce controlled heterogeneity and exploit it during candidate evaluation. Intuitively, an expression whose good empirical fit depends on finely tuned, sample-specific coefficients should be less compatible with parameter estimates obtained across different views of the covariate space. Aggregating such local estimates can therefore reduce the competitiveness of these candidates during evolutionary selection, providing an additional source of regularization against overfitting that is complementary to the outer model-selection criterion.

\subsection{Our Contributions}

Our contributions are threefold:
\begin{itemize}
\item \textbf{A new calibration strategy for SR.} We introduce Dirichlet-Sinkhorn Constant Averaging (DSCA), which replaces centralized coefficient calibration with calibration over controlled covariate-heterogeneous partitions followed by parameter averaging. Because DSCA modifies only the individual-evaluation stage, it can be readily integrated into standard SR pipelines without changing the underlying search procedure or outer model-selection criterion.

\item \textbf{A theoretical characterization of its selection effect.} 
We characterize the excess loss induced by DSCA relative to centralized calibration and show how this additional calibration-stage penalty can alter evolutionary selection, providing an implicit regularization mechanism. At population level, the excess vanishes for correctly specified and identifiable expressions, whereas under misspecification it can persist when environment-specific calibrations do not aggregate to the pooled optimum. We further characterize its dependence on covariate heterogeneity and its finite-sample behavior.

\item \textbf{An empirical evaluation of effectiveness and mechanism.} Across controlled synthetic benchmarks and ten real-world datasets, we compare DSCA with standard centralized calibration methods under multiple model-selection criteria. DSCA improves functional recovery and the accuracy--complexity trade-off, while mechanism analyses show that its excess loss is associated with the generalization gap and that the observed effect is not reproduced by repeated centralized fitting.
\end{itemize}

The remainder of this article is organized as follows. Section \ref{sec:related} reviews memetic approaches, model selection in SR, and distributed inference. Section \ref{sec:methodology} presents DSCA, its partitioning and averaging steps, and an analysis of the excess loss it induces. Section \ref{sec:setup} describes the experimental setup, and Section \ref{sec:results} reports results on the synthetic and real-world tracks. Finally, Section \ref{sec:conclusions} concludes and outlines directions for future work.

\section{Related Work} \label{sec:related}
\subsection{Memetic Approaches and Parameter Optimization in Symbolic Regression}

Discovering SR models requires navigating two interrelated search spaces. The algorithm must identify an appropriate algebraic structure and simultaneously determine the optimal numerical coefficients for that structure. In traditional genetic programming, parameters were often initialized as ephemeral random constants that remained fixed throughout the evolutionary process \cite{koza1994, kommenda2013, kommenda2013b, sobania2022}. This paradigm limited the precision of the resulting models, as the algorithm was forced to rely on the stochastic combination of existing constants via crossover to approximate optimal values.

To overcome these structural limitations, researchers introduced memetic algorithms, which hybridize population-based evolutionary search with localized continuous optimization strategies \cite{moscato1989, trujillo2018}. Topchy and Punch provided foundational contributions to this area by demonstrating that embedding a local gradient descent search for numeric leaf values significantly accelerated the convergence of genetic programming \cite{topchy2001}. Their work also revealed a critical secondary effect: the integration of local search inherently steered the global evolutionary process toward algebraic structures that were mathematically more amenable to parametric tuning.

Subsequent advancements have formalized the application of continuous optimization within discrete expression trees. Kommenda and colleagues conducted extensive analyses on the use of nonlinear least squares techniques, specifically employing the Levenberg-Marquardt algorithm \cite{lm1963, kommenda2013, kommenda2013b, kommenda2020}. To supply the optimizer with exact gradient information, they incorporated automatic differentiation, a methodology that proved faster and more accurate than standard numerical or symbolic differentiation. This approach has become a core component in high-performance genetic programming frameworks such as Operon, which relies on exact function derivatives for local parameter identification \cite{burlacu2020, kronberger2022}. Other recent approaches, such as PySR \cite{cranmer2023}, formalize this process into a dedicated evolve, simplify, and optimize cycle that periodically refines constants via algorithms like BFGS \cite{broyden1970convergence}. 
Concurrently, researchers have tackled this problem using Bayesian frameworks to handle both model selection and uncertainty estimation \cite{bomarito2022, bomarito2023, billa2025}.

Recent comparative studies have evaluated a wider array of continuous optimizers in the context of SR. For example, dos Reis et al. evaluated multiple constant optimization methods, noting that no single technique is universally optimal across all problem types \cite{reis2024benchmarking}. 
Furthermore, Harrison et al.  systematically compared first- and second-order gradient-based coefficient optimization in SR and found second-order approaches to be consistently more effective, despite their higher per-step computational cost \cite{virgolin2023}. This distinction is particularly relevant in SR, where individual expressions generally contain relatively few numerical parameters and nonlinear least-squares or quasi-Newton methods are therefore computationally viable. Recent benchmarking further shows that the quality of coefficient optimization can itself affect the structures favored by evolution: poorly calibrated candidates may remain artificially short because introducing additional constants provides little fitness advantage when those constants cannot be optimized effectively \cite{reis2024benchmarking}.

However, integrating continuous optimization into genetic programming presents algorithmic difficulties. Kronberger highlighted that the evolutionary process frequently generates overparameterized expressions containing mathematically redundant constants \cite{kronberger2022}. This overparameterization leads to rank-deficient and ill-conditioned Jacobian matrices during the nonlinear least squares optimization process, which can degrade optimizer efficiency and reliability.
These observations emphasize that coefficient calibration and structural search cannot be regarded as independent components: the quality and stability of the inner optimization directly influence which expressions remain competitive during evolution.

\subsection{Model Selection and Complexity Control in Symbolic Regression}

Effective coefficient calibration does not by itself resolve the model-selection problem. Improving the fit of the numerical parameters makes a given expression more competitive, but does not determine whether its structure provides a useful approximation of the underlying relationship. Once their coefficients have been calibrated, symbolic expressions must therefore still be compared according to both empirical fit and structural complexity.

Genetic programming has traditionally addressed this problem through explicit parsimony pressure and multi-objective accuracy-complexity optimization, while more recent work has investigated likelihood- and description-based selection criteria such as AIC, AICc, BIC, and MDL \cite{schwarz1978estimating,kass1995bayes,soltani2026comparative,kronberger2026guiding}. These criteria act at the outer selection stage by correcting empirical fit according to some notion of model complexity. AIC and BIC use the number of fitted parameters through their familiar $2p$ and $p\log N$ corrections, whereas recent MDL formulations for SR additionally account for structural complexity and coefficient precision through an evidence-related description-length objective.

The standard forms of these corrections nevertheless rely on assumptions that are not automatically satisfied by symbolic models. In particular, parameter-count penalties such as those underlying AIC and BIC are derived under regular parametric asymptotics, including local identifiability and nonsingular Fisher information. Symbolic expressions can violate these conditions through redundant coefficients, equivalent parameterizations, inactive components, and highly correlated or weakly identified parameter directions \cite{kronberger2022}. In singular models, nominal parameter count no longer provides an adequate measure of effective statistical complexity; for evidence-based quantities, the standard Laplace expansion is instead replaced by geometry-dependent corrections such as those involving the real log canonical threshold \cite{watanabe2009algebraic,watanabe2013widely}.

Misspecification raises a distinct problem. The model-identification and evidence-based interpretations underlying criteria such as BIC and Bayesian or description-length model comparison are most naturally formulated in an \emph{M-closed} setting, in which the data-generating distribution is represented within the candidate model class \cite{bernardo1994bayesian}. Symbolic-regression search is instead naturally \emph{M-open}: most individuals encountered during evolution are misspecified approximations, and the true relationship need not belong to the symbolic hypothesis space, let alone to the finite population available at a particular generation. AIC is less directly tied to this assumption because its target is predictive risk, but its standard parameter-count correction still relies on regular asymptotics.

Despite these limitations, selection criteria have proved useful in SR, particularly for controlling formula bloat and favoring parsimonious expressions. AIC and BIC have been successfully used to guide evolutionary selection, while recent experiments with richer MDL-based penalties show that their stronger complexity pressure can also lead to premature convergence or excessively simple expressions when used directly as evolutionary fitness \cite{soltani2026comparative,kronberger2026guiding}. More generally, these criteria have been more directly effective at controlling the complexity of the formulas retained by the search than at preventing misspecified, sample-sensitive structures from becoming competitive after coefficient calibration. DSCA addresses this complementary mechanism: rather than adding another outer complexity penalty, it modifies the coefficient calibration through which an individual is evaluated and can therefore be combined with AIC, BIC, MDL, or other selection criteria.

\subsection{Distributed Inference, Federated Learning, and Subset-Based Model Aggregation}

Partitioning large datasets to compute local estimators independently and subsequently aggregating the results has emerged as a highly scalable paradigm for statistical learning. Theoretical analyses demonstrate that averaging local solutions, such as empirical risk minimizers, can achieve the minimax optimal statistical convergence rates of a centralized model, provided the number of data partitions is bounded appropriately \cite{zhang2013, zhang2013comm, rosenblatt2016optimality}. The central idea is that parameter information extracted independently from different subsets can be combined into a single global estimate, provided that the local estimators are sufficiently compatible.

This mathematical architecture forms the foundation of federated learning, where the objective is to collaboratively train a global model across decentralized clients while strictly preserving data privacy \cite{mcmahan2017}. A critical challenge in federated networks is the presence of non-identically distributed data across participating clients \cite{hsu2019measuring}. To benchmark algorithmic resilience to statistical heterogeneity, researchers often synthesize non-ideal partitions using Dirichlet distributions. The concentration parameter of the Dirichlet distribution provides a convenient way to control the degree of statistical heterogeneity of feature and label among simulated clients.

Translating these principles to SR, early federated frameworks aggregated fitness across clients but lacked global parameter estimation, restricting models to evaluating randomly sampled formulas with fixed ephemeral random constants \cite{dong2022}. To resolve this fundamental limitation, Billa et al. developed Bayesian Federated Symbolic Regression, which fuses local Gaussian posteriors for marginal likelihood-based model selection without sharing raw data \cite{billa2025}. DSCA starts from the same general idea of estimating numerical parameters locally and combining information across heterogeneous data sources, but uses it for a different purpose. In a genuine federated problem, heterogeneity is imposed by the participating clients and aggregation seeks to recover an effective common model despite that heterogeneity. In our centralized setting, the partitions can instead be designed deliberately. We exploit this freedom to construct subsets with different covariate marginals while preserving the conditional distribution of the response, and use arithmetic aggregation not as a communication-efficient approximation to centralized optimization, but as a controlled perturbation of candidate evaluation during evolutionary selection.

\section{Methodology} \label{sec:methodology}

The proposed approach, termed Dirichlet-Sinkhorn Constant Averaging (DSCA), modifies the coefficient-calibration stage through which candidate expressions are evaluated during the evolutionary search.

Let $\mathcal{D}=\{(x_i,y_i)\}_{i=1}^{N}$ denote the input dataset. Standard memetic calibration fits the coefficients of each candidate once on the available training data. DSCA replaces this single calibration step with two stages:

\begin{itemize}
 \item Once, before the search, the dataset $\mathcal{D}$ is used to create $K$ equally sized subsets with controlled covariate heterogeneity.
 \item During the search, every candidate is calibrated independently on each subset and the resulting parameter vectors are averaged to obtain the parameterization used for fitness evaluation.
\end{itemize}

The following subsections describe these two contributions in detail.

\subsection{Surrogate-Based Dirichlet-Sinkhorn Partitioning}
Starting from $\mathcal{D}$, the objective is to generate $K$ mutually exclusive, equally sized subsets $\mathcal{D}_1, \dots, \mathcal{D}_K$. 

Our mechanism is inspired by a recent work where Dirichlet allocation is used to simulate client heterogeneity in federated SR \cite{billa2025}. Here, we use a similar approach in a centralized setting to leverage controlled data heterogeneity as a probe of model calibration. Notably, unlike \cite{billa2025}, sample allocation is not driven by target observation quantiles, but is a function of covariates only. As we will show in Section~\ref{sec:theory}, this ensures that $p(y\mid x)$ stays identical across subsets. This property guarantees that the population penalty vanishes for correctly specified
and identifiable models, while misspecified models can incur a persistent penalty that,
in the near-homogeneous perturbative regime, scales with partition heterogeneity.

\begin{algorithm}[t]
\caption{Surrogate-Based Dirichlet-Sinkhorn Partitioning}
\label{alg:dirichlet_sinkhorn}
\begin{algorithmic}[1]
\REQUIRE Training set $\mathcal{D}$ of size $N$, surrogate fraction $\rho$, partitions $K$, strata $S$, heterogeneity
$\alpha$
\STATE Draw $\mathcal{D}_{\text{sur}} \subset \mathcal{D}$ with $|\mathcal{D}_{\text{sur}}| = \lfloor \rho N \rfloor$;
set $\mathcal{D}_{\text{opt}} = \mathcal{D} \setminus \mathcal{D}_{\text{sur}}$, $N' = |\mathcal{D}_{\text{opt}}|$.
\STATE Fit ridge surrogate $g$ on $\mathcal{D}_{\text{sur}}$ (penalty by LOO CV).
\STATE Score $u_i \leftarrow g(x_i)$ for all $i \in \mathcal{D}_{\text{opt}}$.
\STATE Sort $\mathcal{D}_{\text{opt}}$ by $u$ and bin into $S$ equal strata.
\STATE Initialize $M \in \mathbb{R}^{S \times K}$ where each row $M_{s,*} \sim \text{Dir}(\alpha \mathbf{1}_K)$.
\STATE Define row targets $\mathbf{r} = (N'/S)\mathbf{1}_S$ and column targets $\mathbf{c} = (N'/K)\mathbf{1}_K$.
\FOR{iteration $1, 2, \dots, 50$}
    \STATE Scale rows: $M_{s,k} \leftarrow M_{s,k} \times (r_s / \sum_{j=1}^K M_{s,j})$
    \STATE Scale columns: $M_{s,k} \leftarrow M_{s,k} \times (c_k / \sum_{i=1}^S M_{i,k})$
\ENDFOR
\STATE Initialize $M_{\text{int}} = \lfloor M \rfloor$.
\STATE Row deficits $d^r_s = r_s - \sum_{j=1}^K M_{\text{int}_{s,j}}$; column deficits
$d^c_k = c_k - \sum_{i=1}^S M_{\text{int}_{i,k}}$.
\STATE Fractional remainders $F = M - M_{\text{int}}$.
\FOR{$s = 1$ to $S$}
    \WHILE{$d^r_s > 0$}
        \STATE Select column $k^*$ with $d^c_{k^*} > 0$ maximizing $F_{s,k^*}$.
        \STATE $M_{\text{int}_{s,k^*}} \leftarrow M_{\text{int}_{s,k^*}} + 1$
        \STATE $d^r_s \leftarrow d^r_s - 1$; \quad $d^c_{k^*} \leftarrow d^c_{k^*} - 1$
        \STATE $F_{s,k^*} \leftarrow -1$ \COMMENT{Prevent immediate re-selection}
    \ENDWHILE
\ENDFOR
\STATE Group $\mathcal{D}_{\text{opt}}$ by strata and shuffle indices.
\STATE Slice and deal to $K$ partitions according to $M_{\text{int}}$.
\RETURN $K$ partitions and the evaluation set $\mathcal{D}_{\text{opt}}$.
\end{algorithmic}
\end{algorithm}

Given $\mathcal{D}$, the partitioning procedure is therefore guided by four main parameters: i) the fraction $\rho$ of $\mathcal{D}$ which is used to define the covariate function driving sample stratification for allocation, ii) the number of subsets  $K$, iii) the number of strata $S$, and iv) the concentration parameter, $\alpha$, which controls data heterogeneity across partitions. Algorithm \ref{alg:dirichlet_sinkhorn} details the procedure, which is organized into six main stages:

\begin{enumerate}
    \item \textbf{Surrogate Scoring (steps 1-3):} A fraction $\rho$ of $\mathcal{D}$ is set aside as $\mathcal{D}_{\text{sur}}$ and used to fit a ridge regression $g$, predicting $y$ from $x$, with the penalty selected by leave-one-out cross-validation. 
    The subset $\mathcal{D}_{\text{sur}}$ is then discarded. The remaining $N' = \lceil (1-\rho)N \rceil$ samples form $\mathcal{D}_{\text{opt}}$, which will be partitioned into $\mathcal{D}_1,\ldots,\mathcal{D}_K$. To this aim, $x_i\in \mathcal{D}_{\text{opt}}$ receives the score $u_i = g(x_i)$.
    
    The surrogate scoring function $g$ is a deterministic function of the covariates alone, and fitting $g$ on held-out data keeps it independent of the targets later used for calibration. Surrogate quality affects how informative the induced heterogeneity is, not the validity of the argument (see Section \ref{sec:theory}): a poor surrogate simply yields nearly random, weakly heterogeneous partitions. Any covariate-measurable score can be used, including a single input variable correlated with the target or the output of a shallow tree ensemble.

    \item \textbf{Stratification (step 4):} $\mathcal{D}_{\text{opt}}$ is sorted by $u$ and divided into $S$ equal-sized strata of $N'/S$ samples each. 

    \item \textbf{Dirichlet Sampling (step 5):} An $S \times K$ allocation matrix $M$ is initialized to define the allocation of each stratum across partitions. Each row is drawn from a symmetric Dirichlet distribution, $M_{s,*} \sim \text{Dirichlet}(\alpha \mathbf{1}_K)$, so that $M_{s,k}$ is the fraction of stratum $s$ tentatively assigned to partition $k$, and each row describes how stratum $s$ is spread over the $K$ partitions. 
    
    Large $\alpha$ spreads each stratum more evenly across partitions, so that the $\mathcal{D}_k$ become increasingly similar in covariate composition to $\mathcal{D}_{\mathrm{opt}}$; small $\alpha$ concentrates each stratum in fewer partitions and produces stronger covariate heterogeneity.

    \item \textbf{Matrix Balancing (steps 6-10):} $M$ is subjected to the Sinkhorn-Knopp algorithm \cite{sinkhorn1967concerning} to enforce the marginal constraints:  Rows $M_{s,*}$ must sum to $N'/S$ so that every sample is used, and columns $M_{*,k}$ to $N'/K$ so that partitions are equally sized. The matrix is updated by alternating row and column scaling:
    \begin{equation}
    M_{s,k} \leftarrow M_{s,k} \frac{N'/S}{\sum_{j=1}^K M_{s,j}} \quad M_{s,k} \leftarrow M_{s,k} \frac{N'/K}{\sum_{i=1}^S M_{i,k}}
\end{equation}
    This iterative proportional fitting converges to a balanced matrix of fractional allocations, typically in fewer than ten iterations.

    \item \textbf{Integer Rounding (steps 11-21):} The balanced matrix is converted to integers by $M_{\text{int}} = \lfloor M \rfloor$, and the residual samples are greedily assigned by incrementing the cells with the largest fractional remainders until the integer margins match the targets exactly.

    \item \textbf{Slicing (steps 22-23):} Sample indices are grouped by stratum, shuffled, and sliced mutually exclusively according to $M_{\text{int}}$.
\end{enumerate}

Since partitioning is performed once, before the evolutionary search begins, and the surrogate fit is a single ridge regression on at most $\rho N$ points, the overhead is negligible relative to the search itself.

\subsection{Local Optimization and Averaging}

Before numerical calibration, each candidate expression $f(\cdot,\theta)$ is simplified symbolically. This preprocessing removes exact algebraic redundancies and structurally non-identifiable parameterizations whenever they can be detected, thereby reducing trivial non-identifiabilities before parameter vectors are optimized and averaged. For example, a parameterization such as $\theta_1\theta_2 e^{\theta_3}h(x) $ depends only on a single effective multiplicative coefficient and should therefore be collapsed before numerical optimization to, e.g., $\theta_1  h(x) $.

During the evolutionary search, $f(\cdot,\theta)$ is calibrated independently on each of the $K$ generated partitions. To this end, we apply a gradient-based optimizer, here BFGS \cite{broyden1970convergence}, to each individual partition, yielding $K$ locally calibrated parameter vectors $\hat{\theta}_1,\ldots,\hat{\theta}_K$, where $\hat{\theta}_k$ denotes the numerical solution returned on partition $k$.

The $K$ sets of local constants are aggregated by computing their arithmetic mean. This produces a single, averaged set of constants for the candidate expression:
\begin{equation}
    \bar{\theta} = \frac{1}{K} \sum_{k=1}^K \hat{\theta}_k\,.
\end{equation}
Finally, the fitness of the candidate is computed on $\mathcal{D}_{\text{opt}}$, the union of the $K$ partitions, fixing parameter calibration to $\bar{\theta}$.

\subsection{Theoretical analysis of DSCA implications}\label{sec:theory}
Replacing centralized calibration with the average of $K$ locally calibrated
parameter vectors changes the loss assigned to a candidate expression. We analyze
this effect in five steps. First, we isolate the excess loss that is directly relevant
to evolutionary selection. We then formalize the partitioned data as covariate-reweighted
population environments and characterize this excess under correct specification and
misspecification. Next, we connect the population result with the empirical
finite-sample behavior. We then study how the surrogate scoring function $g$ and the heterogeneity
parameter $\alpha$ determine whether, and how strongly, the mechanism acts. Finally,
we discuss how the assumptions of the theoretical analysis relate to the numerical
implementation. Proofs and full derivations are given in the supplementary material.

Throughout this section, we use the following notation. Let $(X,Y)$ denote an input-target variables and $f(X,\theta)$ a candidate symbolic expression with parameter vector $\theta$. We interpret $f(X,\theta)$ as a model for the conditional mean of $Y$ given $X$.
Equivalently, under additive Gaussian noise, $Y=f(X,\theta)+\varepsilon$, where $\varepsilon\sim\mathcal{N}(0,\sigma^2)$, thus the candidate induces the conditional model
$p_\theta(y\mid x)=\mathcal{N}(f(x,\theta),\sigma^2)$. For fixed noise variance, maximizing this likelihood with respect to $\theta$ is equivalent to minimizing mean squared error. We therefore describe coefficient calibration through its empirical loss: $L_k(\theta)$ denotes the mean loss on partition $\mathcal{D}_k$, while $L_{N'}(\theta)$ denotes the corresponding loss on the complete optimization set $\mathcal{D}_{\mathrm{opt}}$. Their population counterparts, introduced below, are denoted by $\Lambda_k(\theta)$ and $\Lambda(\theta)$.

\subsubsection{DSCA and selection-relevant excess loss}
Because the partitions are disjoint and equally sized,
\begin{equation}\label{eq:mixture}
    L_{N'}(\theta)=\frac{1}{K}\sum_{k=1}^{K}L_k(\theta)
\end{equation}
holds exactly for every $\theta$. Let  $\hat\theta_k=\arg\min_\theta L_k(\theta)$ be the local empirical optima, $\bar\theta=K^{-1}\sum_k\hat\theta_k$ their average, and $\hat\theta_{N'}=\arg\min_\theta L_{N'}(\theta)$ the centralized optimum. 

DSCA evaluates the candidate at $\bar\theta$, whereas centralized calibration
would evaluate the same structure at $\hat\theta_{N'}$. The selection-relevant
excess loss introduced by DSCA is therefore
\begin{equation}\label{eq:delta}
    \Delta_{N'} :=  L_{N'}(\bar\theta) -  L_{N'}(\hat\theta_{N'}) .
\end{equation}
By construction, $\Delta_{N'}\geq0$ under exact minimization. It measures how much
additional empirical loss is assigned to a fixed candidate structure because DSCA
uses the average of the locally calibrated parameters rather than the centralized
optimum. This is the quantity through which DSCA can modify evolutionary selection.

\subsubsection{Covariate-induced environments and population behavior}
DSCA operates empirically on the $K$ finite partitions $\mathcal{D}_1,\ldots,\mathcal{D}_K$. To determine whether the resulting excess
$\Delta_{N'}$ is purely a finite-sample effect or persists at population level, we
associate each finite partition $\mathcal{D}_k$ with a population environment
$P_k$ induced by the same covariate-reweighting mechanism.

Let $P$ denote the original population law of $(X,Y)$, with density
$p(x,y)=p(y\mid x)p(x)$.  Given the fitted surrogate $g$, let
$A\in\{1,\ldots,K\}$ denote the population environment assigned to an observation
and define $\pi_k(u):=\Pr(A=k\mid g(x)=u)$. Since assignment depends on the observation only through $g(X)$, $\Pr(A=k\mid X=x,Y=y)=\pi_k(g(x))$. Equal-size balancing gives
$\Pr(A=k)=1/K$, and hence environment $k$ has density $p_k(x,y) = p(x,y\mid A=k) = K\,\pi_k(g(x))\,p(x,y)$.
Consequently, $K^{-1}\sum_k p_k(x,y)=p(x,y)$ and, more importantly,
\begin{equation}\label{eq:cond-invariance}
    p_k(y\mid x)=p(y\mid x)\qquad\text{for every }k\,.
\end{equation}
Thus the environments may differ in their covariate marginals while preserving
the same conditional law $p(y\mid x)$. This requires only that, once fitted, the
allocation score be a function of $x$; the quality of the surrogate allocation function $g$ affects the informativeness
of the induced heterogeneity, not the validity of
\eqref{eq:cond-invariance}.

The population counterparts of $L_k$ and $L_{N'}$ are $\Lambda_k(\theta):=\mathbb{E}_{P_k}
\!\left[(Y-f(X,\theta))^2\right],$ and $\Lambda(\theta):=\mathbb{E}_{P}\!\left[(Y-f(X,\theta))^2\right]$, with minimizers $\theta_k^*$ and $\theta^*$, respectively. Under the environment construction conditions, the population counterpart of $\Delta_{N'}$ is then $\Delta:=\Lambda(\bar\theta^*)-\Lambda(\theta^*)$, where $\bar\theta^*=K^{-1}\sum_k\theta_k^*$. We now inspect, under identifiability assumptions, how $\Delta$ changes between well specified and misspecified models. 

\noindent\textbf{Correctly specified models.} A candidate is correctly specified for squared-error calibration if there exists a parameter vector $\theta^*$ such that $f(X,\theta^*)=\mathbb{E}[Y\mid X]$ almost surely. Because the induced environments
modify only the covariate marginal while preserving $p(y|x)$, the same optimum is retained in every environment. Under regularity and support conditions (details can be found in the supplementary material), $\theta_k^*=\theta^*$ for all $k$ and $\bar\theta^*=\theta^*$, leading to $\Delta=0$.
This is the main reason why we define allocation based on $x$ only: target-dependent partitioning can alter $p(y\mid x)$ and therefore move the optimum even for a correctly specified candidate.

\noindent\textbf{Misspecified models.}
A candidate is misspecified when no parameter vector $\theta$ reproduces the
conditional mean $\mathbb{E}[Y\mid X]$. Its environment-specific optimum
$\theta_k^*$ is then a covariate-weighted pseudo-true parameter. Reweighting
different regions of covariate space can change the approximation trade-off made by
the candidate and therefore produce different preferred parameter values across
environments. Under a local quadratic approximation, the supplementary material shows that
\begin{equation}\label{eq:quadratic-delta}
    \Delta=
    \frac{1}{2}
    \left\|
    \frac{1}{K}\sum_{k=1}^{K}
    (H_k-\bar H)
    (\theta_k^*-\bar\theta^*)
    \right\|_{\bar H^{-1}}^{2},
\end{equation}
where $H_k=\nabla^2\Lambda_k(\theta_k^*)$ is the local curvature in environment
$k$, $\bar H=K^{-1}\sum_k H_k$ is the average curvature, and
$\|v\|_{\bar H^{-1}}^2=v^\top\bar H^{-1}v$.
Thus $\Delta$ depends jointly on how the preferred parameters shift across
environments and on how their local curvatures differ. This shows that misspecification alone is not sufficient to always ensure a population-level DSCA penalty. If all $H_k$ coincide or if by chance the exact cancellation $\sum_{k=1}^{K}(H_k-\bar H)d_k=0$ holds, then $\Delta=0$ even when the $\theta_k^*$ differ substantially. Absent such a symmetry or compensating structure, heterogeneous curvatures and displacements generically produce a nonzero imbalance, so that $\bar\theta^*\neq\theta^*$ and a persistent population excess arises.

Overall, the population effect characterized above is governed by specification,
identifiability, and sensitivity of the preferred calibration to the induced
environments, rather than by parameter count itself. DSCA is therefore not an explicit
complexity penalty: candidates with many coefficients need not incur a population
penalty, while misspecified candidates can incur one when heterogeneous calibration
moves the averaged parameters away from the pooled optimum. Structural complexity
must consequently remain controlled by the outer selection mechanism.

\subsubsection{Finite-sample behavior}
The preceding results describe the population effect, whereas DSCA operates on
finite empirical partitions. We therefore ask whether the observed excess
$\Delta_{N'}$ is only a sampling discrepancy or instead converges to the
population quantity $\Delta$. Under standard regularity conditions and assuming identifiable models, fixed $K$,
and large $N'$,
\begin{equation}\label{eq:dichotomy}
\Delta_{N'}=
\begin{cases}
O_p(1/N'), & \text{correctly specified},\\[2pt]
\Delta+O_p(1/\sqrt{N'}), & \text{misspecified };
\end{cases}
\end{equation}
where $O_p(\cdot)$ denotes stochastic order, i.e., $X_{N'}=O_p(a_{N'})$ means that $X_{N'}/a_{N'}$ remains bounded in probability. Thus, under correct specification the entire DSCA excess is a second-order finite-sample effect, whereas when $\Delta>0$ a nonzero population excess persists and only its sampling fluctuations vanish, at a slower rate.

\subsubsection{The role of environment definition and heterogeneity}
The DSCA effect depends on two complementary aspects of environment construction:
the surrogate allocation function $g$ determines which regions of covariate space are contrasted, whereas
$\alpha$ controls the strength of the resulting heterogeneity.

\noindent\textbf{Surrogate informativeness.} For a misspecified candidate, an informative surrogate should induce environments
that favor different calibrations. Under squared-error loss, the relevant
interaction for environment $k$ is
\begin{equation}
    r_k:=\mathbb{E}_{P}\!\left[
        \delta_k(X)
        \nabla_\theta
        \mathbb{E}\!\left[
            (Y-f(X,\theta))^2 \mid X
        \right]_{\theta=\theta^*}
    \right],
\end{equation}
where $\delta_k(X)=\pi_k(g(X))-1/K$ is the covariate reweighting induced by the
surrogate. Thus $r_k$ measures whether that reweighting is aligned with regions in
which the candidate prefers a different calibration: small $r_k$ implies little
first-order displacement of the local optimum, whereas nonzero $r_k$ induces a
calibration shift. The supplementary material formalizes this relation perturbatively.

Choosing $g$ as a predictor of $Y$ provides a candidate-independent,
response-informed ordering of covariate space, making it more likely to contrast
regions with different approximation trade-offs/estimated noise levels. It is not optimal for every candidate, since the relevant response also depends on candidate-specific parameter sensitivity, but it provides a common response-informed
partitioning rule that can be constructed once before the search.

\noindent\textbf{Heterogeneity level.}
Within DSCA, $\alpha$ is the concentration parameter
of the initial Dirichlet allocation and therefore controls the strength of the induced covariate heterogeneity.
As $\alpha\to\infty$, the allocation probabilities approach $1/K$, the environments become homogeneous, $P_k\to P$, and $\Delta$ vanishes. More precisely, the perturbation analysis in the supplementary material gives
\[
    \Delta=O\!\left((K\alpha+1)^{-2}\right)
    \qquad \text{as }\alpha\to\infty.
\]
Hence nonzero heterogeneity is necessary for DSCA to induce a population-level selection effect, although the result does not imply that $\Delta$ is monotone in $\alpha$ or identify an optimal heterogeneity level.

Heterogeneity also cannot be increased indefinitely, since excessively concentrated environments may provide insufficient covariate support for local identification. Parameter variation can then reflect weak identification or induced near-singularity rather than genuine sensitivity to covariate reweighting.

\noindent\textbf{From theoretical assumptions to practical implementation.}
The preceding analysis is stated in terms of exact empirical minimizers and identifiable population optima. In the implementation, parameter estimates are obtained numerically with BFGS and therefore need not attain the corresponding global empirical minima. As a diagnostic of obvious optimization failure, we checked the ordering implied by local minimization,
\begin{equation}
    L_k(\bar{\theta}) \geq L_k(\hat{\theta}_k),
\end{equation}
which holds for 98.9\% of the evaluated candidate-partition pairs. Violations are therefore rare, indicating that the observed DSCA averaging effect is not driven by obvious failures of the local optimizer.

Candidate expressions are also symbolically simplified before calibration. This preprocessing removes exact algebraic redundancies whenever they can be identified symbolically, thereby strengthening the correspondence with the identifiability assumption used above and improving the conditioning of the subsequent calibration step. Residual near-singular or weakly identified parameter directions may still arise from the local statistical geometry of the candidate expression.

\section{Experimental Setup} \label{sec:setup}
\subsection{Genetic Programming Framework}

For our experimental evaluations, we employ a standard tree-based Genetic Programming algorithm as the foundational framework. Because the contribution of this work concerns the coefficient-calibration mechanism used during individual evaluation rather than the evolutionary operators themselves, we keep the GP configuration fixed across all compared methods. The complete set of evolutionary hyperparameters is detailed in Table \ref{tab:ea_settings} and explained in the following.

The initial population is generated using a random initialization method with decay. In this approach, tree depth is regulated by two hyperparameters: parsimony (representing the initial operator probability at the root node) and a parsimony decay (which reduces this probability as tree depth increases). During the evolutionary cycle, parent selection is performed via tournament selection. The population is subsequently evolved using a uniformly distributed set of genetic operators to maintain structural diversity.

The critical distinguishing feature of our implementation is the integration of the proposed coefficient calibration, which transforms the standard GP into a memetic algorithm. After variation generates a new individual, its expression is symbolically simplified and its free numerical parameters are calibrated before fitness evaluation. The calibration rule is the only component changed across the compared methods.
The memetic phase proceeds as follows for each newly generated individual:
\begin{enumerate}
    \item \textbf{Parameter Extraction:} After symbolic simplification, the tree is traversed to identify and extract all free constants and parameterizable operator coefficients.
    \item \textbf{Local Search:} The selected calibration procedure is applied to the extracted parameters. Centralized baselines optimize them on the complete optimization set, whereas DSCA performs the partition-wise optimization and averaging procedure described in Section~\ref{sec:methodology}.
    \item \textbf{Parameter Injection:} Following a Lamarckian evolutionary scheme, the resulting calibrated parameters are directly injected back into the individual's symbolic tree, permanently updating its genetic material.
    
\end{enumerate}

Only after this inner calibration step terminates, the next generation of individuals is selected. Specifically, each calibrated individual is scored according to one of the three selection criteria and the survival strategy is applied to select the best-scoring individuals from the combined parent-offspring pool.

Coefficient calibration therefore determines the parameter values at which an individual is evaluated, whereas the outer criterion determines how calibrated individuals compete during evolution. To test whether the effect of DSCA depends on a particular outer selection rule, we evaluate each calibration strategy under three criteria commonly used in SR: the negative log-likelihood (NLL), AIC, and BIC \cite{soltani2026comparative,ramlan2026,kronberger2026guiding}. These criteria span unpenalized likelihood fitting and two standard levels of parameter-count penalization, $2k$ and $k\ln N$, where $k$ is the number of free parameters and $N$ the number of training samples. Recent comparative studies indicate that they form a compact and competitive set, since more elaborate alternatives are not uniformly superior when used directly to guide evolutionary search \cite{soltani2026comparative,ramlan2026,kronberger2026guiding}. The purpose of this comparison is thus not to benchmark model-selection criteria, but to assess whether the effect of DSCA is robust across them.

\begin{table}[htbp]
    \centering
    \caption{Evolutionary and numerical optimization hyperparameters.}
    \label{tab:ea_settings}
    \begin{tabular}{@{} p{0.4\linewidth} p{0.55\linewidth} @{}}
        \toprule
        \textbf{Parameter} & \textbf{Value/Setting} \\
        \midrule
        Population size ($\mu$) & $100$ \\
        Offspring size ($\lambda$) & $100$ \\
        Number of generations & $100$ \\
        Initialization method & Random with decay \\
        Parsimony  & $0.8$ \\
        Parsimony decay & $0.85$ \\
        Maximum tree size & $75$ nodes \\
        Function set & $\{+, -, \times, \div, (\cdot)^2, (\cdot)^3,$ $\sin, \exp, \ln\}$ \\
        Terminal set & Independent variables $X_1, \dots, X_n$, ERCs $\in [-10, 10]$ \\
        Selection method & Tournament selection (size $k=3$) \\
        Genetic Operators & Crossover, Subtree mutation, Insert node, Delete node, Mutate leaf, Mutate operator \\
         Selection criteria & [NLL, AIC, BIC] \\
        Survival strategy & $(\mu + \lambda)$ selection (best $100$ survive) \\
        \midrule
        \multicolumn{2}{c}{\textbf{Numerical Optimizer Settings}} \\
        \midrule
        Optimization frequency & Applied to all newly generated individuals \\
        \multicolumn{2}{@{}p{\linewidth}@{}}{\textit{Proposed Method: DSCA}} \\
        \hspace{4mm} Base optimizer & BFGS (settings as in Baseline 1) \\
        \hspace{4mm} Number of partitions ($K$) & $5$ \\
        \hspace{4mm} Number of strata ($S$) & $10$ \\
        \hspace{4mm} Heterogeneity ($\alpha$) & $1$ \\
        \hspace{4mm} Stratification score & Ridge prediction $g(x)$ \\
        \hspace{4mm} Surrogate fraction ($\rho$) & $15\%$ of the training set \\
        \hspace{4mm} Ridge penalty & Leave-one-out CV over $\{10^{-3},\dots,10^{3}\}$ \\
        \multicolumn{2}{@{}p{\linewidth}@{}}{\textit{Baseline 1: BFGS (via SciPy `minimize')}} \\
        \hspace{4mm} Jacobian estimation & 2-point finite difference \\
        \hspace{4mm} Tolerance (\texttt{tol}) & None (relies on defaults) \\
        \multicolumn{2}{@{}p{\linewidth}@{}}{\textit{Baseline 2: Levenberg-Marquardt (via SciPy `least\_squares')}} \\
        \hspace{4mm} Jacobian estimation & 2-point finite difference \\
        \hspace{4mm} Cost change (\texttt{ftol}) & $1 \times 10^{-8}$ \\
        \hspace{4mm} Step change (\texttt{xtol}) & $1 \times 10^{-8}$ \\
        \hspace{4mm} Gradient norm (\texttt{gtol}) & $1 \times 10^{-8}$ \\
        \bottomrule
    \end{tabular}
\end{table}

\subsection{Baselines}
We compare against two established centralized calibration methods: Levenberg-Marquardt (LM), the standard local search in SR frameworks such as Operon \cite{burlacu2020}, and BFGS, the default optimizer in PySR \cite{cranmer2023}. Both optimize the same within-individual objective as the local fits of DSCA, so differences in evolutionary behavior can be attributed to the calibration strategy rather than to a change in loss function. Throughout, prediction errors are assumed independent and
normally distributed, under which minimizing the NLL with respect to
$\theta$ is equivalent to least-squares fitting. 

To ensure a fair comparison, all baseline optimizers are integrated into the memetic framework in exactly the same manner as our proposed approach. Table \ref{tab:ea_settings} details the hyperparameters used for these algorithms, which follow standard default configurations. In addition, our ablation experiments include a multi-start centralized BFGS control, in which the same structure is optimized repeatedly on the pooled data. This will help distinguish the effect of heterogeneous averaging from the simpler effect of allocating multiple optimization attempts to each candidate.

One asymmetry deserves to be stated explicitly. DSCA reserves $15\%$ of the training samples to fit the stratification surrogate and never uses them again, so it optimizes constants and evaluates fitness on $N' = 0.85N$ samples, whereas the two baselines use all $N$. We kept the nominal training size fixed across methods rather than equalizing $N'$, so the comparison is conservative with respect to our own method.

\subsection{Synthetic Track}

The first track evaluates model recall on synthetic datasets where the true generating equations are known, allowing us to explicitly control the injected noise. Table \ref{tab:synthetic_datasets} shows the generating functions tested. We selected these from widely used benchmarks to ensure a diverse representation of functional forms and operators, including polynomials, rational expressions, exponentials, and trigonometric functions.

We simulate two noise regimes by adopting the methodology from \cite{de2024srbench++}. Specifically, we add Gaussian noise to the target values:
\begin{equation}
    y_{noise} \sim \mathcal{N}\left(y, \sigma_y \sqrt{\frac{ratio}{1 - ratio}}\right)
\end{equation}
where $y$ is the true target value, $\sigma_y$ is the standard deviation of the target, and $ratio$ determines the noise level. For our experiments, we use noise ratios of 0.025 and 0.1, corresponding to the \emph{easy} and \emph{hard} levels in \cite{de2024srbench++}, and both 100  and 200 samples for the training set.

We consider an equation successfully recovered if it achieves an $R^2 > 0.99$ on the noiseless test set. Some studies verify exact symbolic equivalence by checking if the ratio or difference relative to the generating function is constant (typically using libraries like SymPy). We initially tested this methodology but observed too many false negatives, as computer algebra systems frequently fail to simplify structurally equivalent expressions to a common syntactic form.

\begin{table*}[htbp]
\centering
\renewcommand{\arraystretch}{2}
\caption{Generating functions and variable distributions for the synthetic benchmark track.}
\label{tab:synthetic_datasets}
\begin{tabular}{c c c c c}
\hline
Function & Generating Function & Distribution & \textbf{$d$} & Reference \\
\hline
$f_1(x)$ & $8x_1^2 + 8x_2^3 - 15$ & $x_i \sim \mathcal{U}(-3, 3)$ & 2 & Jin-2 \cite{topchy2001, jin2019bayesian, chen2015generalisation} \\
$f_2(x)$ & $2.5x_1^4 - 1.3x_1^3 + 0.5x_2^2 - 1.7x_2$ & $x_i \sim \mathcal{U}(-3, 3)$ & 2 & Jin-1 \cite{topchy2001,jin2019bayesian, chen2015generalisation} \\
$f_3(x)$ & $1.35x_1x_2 + 5.5\sin((x_1-1)(x_2-1))$ & $x_i \sim \mathcal{U}(-3, 3)$ & 2 & Jin-6 \cite{topchy2001,jin2019bayesian, chen2015generalisation} \\
$f_4(x)$ & $\frac{e^{-(x_1-1)^2}}{1.2+(x_2-2.5)^2}$ & $x_i \sim \mathcal{U}(0.3, 4)$ & 2 & Vladislavleva-1 \cite{vladislavleva2008order} \\
$f_5(x)$ & $\frac{30(x_1-1)(x_3-1)}{(x_1-10)x_2^2}$ & $x_1, x_3 \sim \mathcal{U}(0.05, 2), x_2 \sim \mathcal{U}(1, 2)$ & 3 & Vladislavleva-5 \cite{vladislavleva2008order} \\
\hline
\end{tabular}
\end{table*}

\subsection{Real-World Track}

For the second track, we selected 10 real-world datasets from the PMLB collection \cite{olson2017pmlb}, which serves as a standard benchmark within the SR community \cite{la2021contemporary, de2024srbench++}. We chose these specific datasets to ensure a diverse representation of sample sizes, dimensionalities, ratios of binary to continuous variables, and overall difficulty levels for SR algorithms. Table \ref{tab:datasets} details the characteristics of each dataset, including the total number of instances and the test set sizes.

To ensure statistical robustness, we conduct 10 independent runs for each dataset and sample size regime. To prevent data leakage and simulate these different regimes, we employ a splitting strategy for each run. First, we reserve 25\% of the data, sampled uniformly at random, to serve as the test set. From the remaining 75\%, we again sample uniformly at random to extract the necessary number of instances to form the training set. Specifically, we evaluate three training set size regimes: 100, 200, and 500 samples.

\begin{table}[t]
\renewcommand{\arraystretch}{1.35}
\caption{Characteristics of the selected PMLB datasets.} \label{tab:datasets}
\begin{center}
\begin{tabular}{lrrrrr}
\textbf{Dataset}  &\textbf{$d$} &\textbf{$B$}  &\textbf{$\mathbb R$} &\textbf{total} &\textbf{test}\\
\hline
Fri\_c1   &25 &0 &25 &1000 &250\\
Fri\_c2   &5 &0 &5 &1000 &250\\
Pollen    &4 &0 &4 &3790 &948\\
Puma8NH    &8 &0 &8 &8192 &2048\\
Echo M.   &9 &3 &6 &17233 &4309\\
Houses    &8 &0 &8 &20330 &5083\\
BNG lowbwt    &15 &11 &4 &31104 &7776\\
2dplanes    &10 &10 &0 &40768 &10192\\
Breast T.    &13 &9 &4 &114890 &28723\\
BNG pbc    &18 &6 &12 &1000000 &250000\\
\hline
\end{tabular}
\end{center}
\end{table}

\section{Results} \label{sec:results}

\subsection{Synthetic Track}

Table~\ref{tab:wtl_synthetic} summarizes the recovery rates on the synthetic benchmarks. For every combination of sample size, noise level, and selection criterion, it reports on how many of the five generating functions DSCA attains a higher (win), equal (tie), or lower (loss) success rate than each baseline over 10 seeds. DSCA wins in 64 of the 120 function-level comparisons, ties in 48, and loses in 8. It never loses on more than one function in any configuration, and it never loses to BFGS in the 200-sample regime. 

Per-function success rates are reported in the supplementary material. Ties are largely driven by $f_3$, on which all methods fail almost systematically. All losses occur on the rational functions $f_4$ and $f_5$, where the three optimizers perform comparably and differences are typically within one or two seeds. The wins are concentrated on the polynomial functions $f_1$ and $f_2$, and this holds in both noise regimes.

It is worth noting where the advantage is largest. On $f_1$ and $f_2$, which have well-conditioned parameterizations, DSCA gains most under the unregularized NLL criterion, where the baselines have no defense against overparameterized alternatives. Under BIC the gap narrows, since the two mechanisms then act on partly overlapping failure modes. This pattern recurs, more clearly, on the real-world track.

Table \ref{tab:synthetic_success_count_combined} presents the aggregated success counts across the different synthetic datasets. These results demonstrate the superior performance of our proposed approach compared to the baselines, maintaining this advantage both overall and within each individual model selection criterion.

\begin{table}[htpb]
    \centering
    \caption{Win/tie/loss counts of DSCA against each baseline over the synthetic datasets, by sample size, noise level, and selection criterion.}
    \label{tab:wtl_synthetic}
    \begin{tabular}{llcccccc}
        \toprule
        & & \multicolumn{2}{c}{\textbf{NLL}} & \multicolumn{2}{c}{\textbf{AIC}} & \multicolumn{2}{c}{\textbf{BIC}} \\
        \cmidrule(lr){3-4} \cmidrule(lr){5-6} \cmidrule(lr){7-8}
        \textbf{$\boldsymbol{N}$} & \textbf{Noise} & \textbf{BFGS} & \textbf{LM} & \textbf{BFGS} & \textbf{LM} & \textbf{BFGS} & \textbf{LM} \\
        \midrule
        \multirow{2}{*}{\textbf{100}} & Low  & 3/1/1 & 4/1/0 & 3/1/1 & 3/1/1 & 3/2/0 & 1/3/1 \\
                                      & High & 2/3/0 & 2/3/0 & 2/3/0 & 2/3/0 & 4/1/0 & 2/2/1 \\
        \midrule
        \multirow{2}{*}{\textbf{200}} & Low  & 3/2/0 & 3/2/0 & 2/3/0 & 3/2/0 & 2/3/0 & 2/2/1 \\
                                      & High & 3/2/0 & 3/1/1 & 3/2/0 & 3/2/0 & 3/2/0 & 3/1/1 \\
        \bottomrule
    \end{tabular}
\end{table}

\begin{table}[htpb]
    \centering
    \caption{Aggregate success counts on synthetic datasets by sample size and selection criterion.}
    \label{tab:synthetic_success_count_combined}
    \begin{tabular}{l ccc ccc}
        \toprule
        & \multicolumn{3}{c}{\textbf{100 Samples}} & \multicolumn{3}{c}{\textbf{200 Samples}} \\
        \cmidrule(lr){2-4} \cmidrule(lr){5-7}
        \textbf{Algorithm} & \textbf{NLL} & \textbf{AIC} & \textbf{BIC} & \textbf{NLL} & \textbf{AIC} & \textbf{BIC} \\
        \midrule
        \textbf{DSCA (our)} & \textbf{26} & \textbf{31} & \textbf{43} & \textbf{43} & \textbf{49} & \textbf{55} \\
        BFGS & 14 & 14 & 30 & 28 & 31 & 48 \\
        LM   & 5  & 13 & 37 & 24 & 30 & 49 \\
        \bottomrule
    \end{tabular}
\end{table}

\subsection{Real-World Track}\label{sec:realworld}

\subsubsection{Bayesian Statistical Analysis}
Before comparing the algorithms, we aggregate each configuration by taking the median over the 10 independent seeds of every dataset, and we clip $R^2$ values below $-1$ so that degenerate fits do not dominate the comparison. The same aggregation is used for the analysis reported in the remainder of this section.

We compare DSCA against BFGS and LM using the Bayesian Wilcoxon signed-rank test \cite{benavoli2014bayesian,benavoli2017time}. The test provides posterior probabilities that DSCA is better, practically equivalent, or worse than each baseline. We perform the comparison separately for predictive accuracy, measured by $R^2$, and model complexity, measured by the number of expression-tree nodes. Practical equivalence is defined through Regions of Practical Equivalence (ROPEs): $[-0.01,0.01]$ for differences in $R^2$ and $[-2,2]$ nodes for differences in complexity. Differences within these intervals are treated as practically negligible.

Figure~\ref{fig:bayes_test} reports the posterior comparison for predictive accuracy and model complexity. In the 100- and 200-sample regimes, DSCA generally has a high posterior probability of outperforming both centralized baselines. As the sample size
increases, this advantage narrows. This convergence in predictive accuracy should not be interpreted as a consequence of a vanishing population DSCA effect: Eq.~\eqref{eq:dichotomy} shows that, for misspecified candidates with $\Delta>0$, increasing $N'$ only reduces the sampling fluctuations around the persistent excess $\Delta$. Rather, with more data the search has less room to retain structures whose apparent advantage depends mainly on finite-sample calibration effects. The convergence is particularly visible under BIC, whose strong complexity pressure and M-closed interpretation can substantially restrict the set of competitive structures. At $N=500$ under BIC, DSCA also closely tracks its centralized BFGS counterpart in predictive accuracy, suggesting that residual differences may partly reflect the shared base optimizer rather than the averaging mechanism. Importantly, in Figure~\ref{fig:bayes_test} the same convergence is not observed for complexity: DSCA retains a high posterior probability of producing smaller expressions across configurations. This divergence suggests a non-trivial interplay between heterogeneous calibration and explicit complexity control: stronger outer penalization reduces the predictive differences between methods without eliminating the structural effect induced by DSCA.

\begin{figure}
    \centering
    \includegraphics[width=0.99\linewidth]{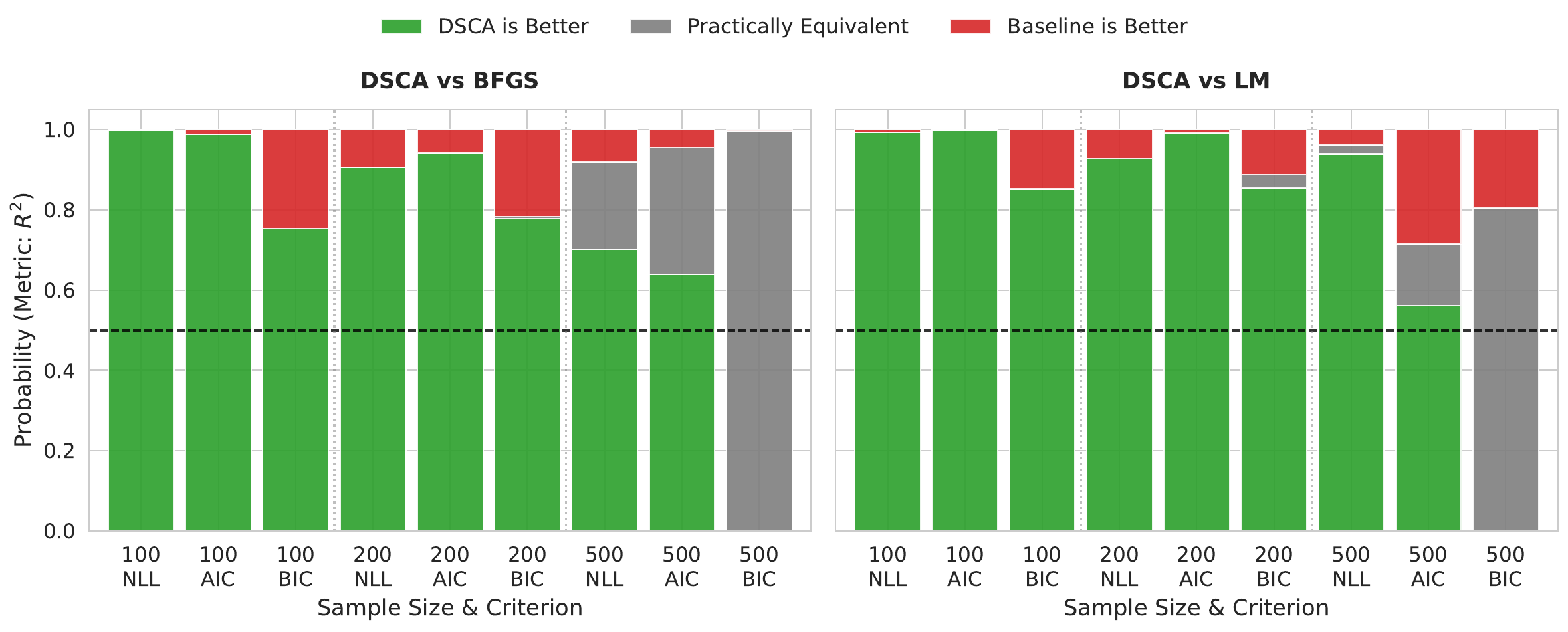}
    \includegraphics[width=0.99\linewidth]{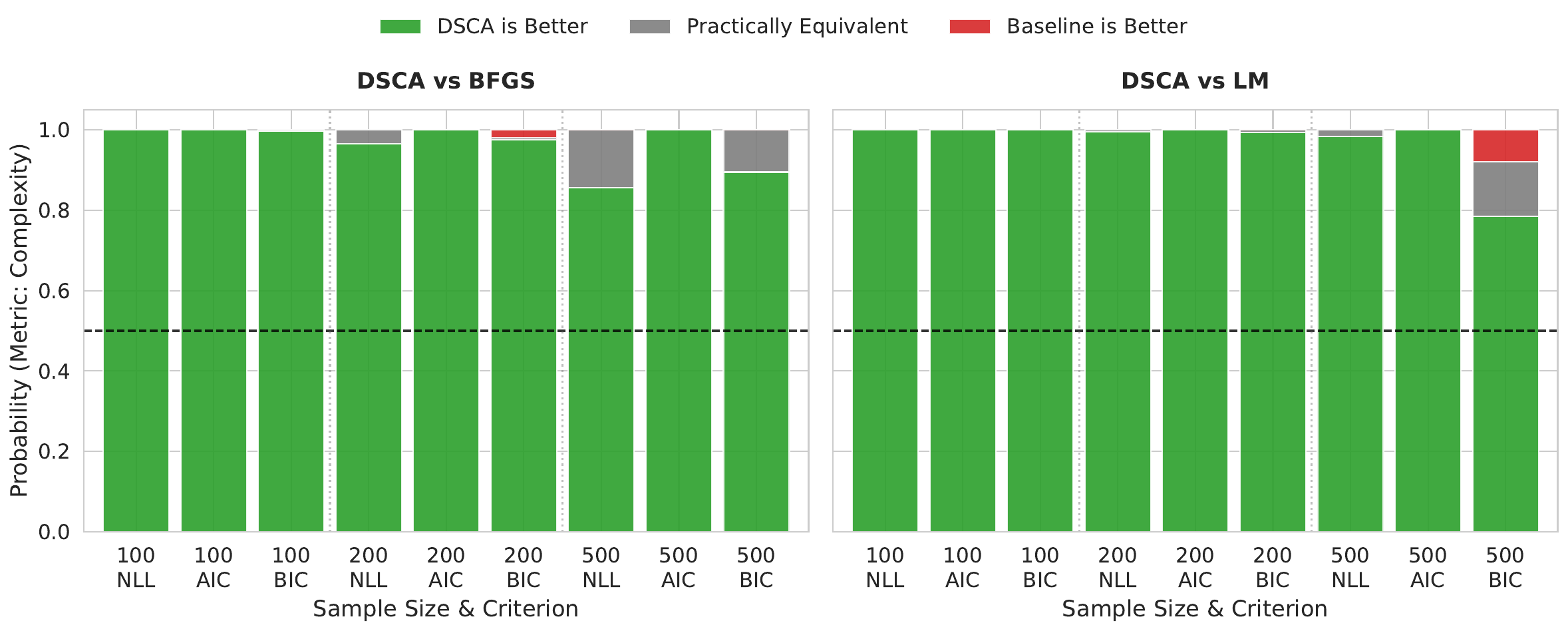}
    \caption{Bayesian Wilcoxon signed-rank test results evaluating predictive accuracy (top panels) and model complexity (bottom panels) against the baselines.}
    \label{fig:bayes_test}
\end{figure}

\subsubsection{Absolute Performance Differences}

While rank-based evaluations effectively illustrate relative positioning, they inherently hide the absolute magnitudes of performance gaps between algorithms. To provide a concrete measure of these performance variations, Tables \ref{tab:diff_table_r2} and \ref{tab:diff_table_comp} detail the absolute differences between the evaluated algorithms and a standard baseline. We selected LM as the baseline, given its widespread adoption within the SR community.

To ensure results are comparable across datasets that yield different baseline accuracies and equation sizes, we aggregate the scores by first calculating the median performance across seeds for both the evaluated algorithm and the baseline, and then computing the difference. For a given algorithm $a$, evaluated on dataset $d$, we define the aggregated difference $\Delta_a$ as:
\begin{equation}
\begin{aligned}\Delta_a = \frac{1}{N_D}\sum_{d=1}^{N_D}&\Big(\text{med}_{i=1}^{10}(\text{Score}_{a, d, i}) - \text{med}_{i=1}^{10}(\text{Score}_{\text{LM}, d, i})\Big)\end{aligned}    
\end{equation}
where $N_D$ is the total number of datasets.

\begin{table}[]
\renewcommand{\arraystretch}{1.25}
\caption{Predictive accuracy ($R^2$) differences ($\Delta_a$) relative to the Levenberg-Marquardt baseline (Mean $\pm$ Std).}
    \centering
    \begin{tabular}{llcc}
\toprule
 & Algorithm & BFGS & DSCA (our) \\
Criterion & $N$ samples &  &  \\
\midrule
\multirow[t]{3}{*}{NLL} & 100 & 0.26 $\pm$ 0.57 & \textbf{0.85 $\pm$ 0.74} \\
 & 200 & 0.28 $\pm$ 0.60 & \textbf{0.40 $\pm$ 0.63} \\
 & 500 & 0.02 $\pm$ 0.04 & \textbf{0.14 $\pm$ 0.40} \\
 \cline{1-4}
\multirow[t]{3}{*}{AIC} & 100 & 0.30 $\pm$ 0.43 & \textbf{0.78 $\pm$ 0.66} \\
 & 200 & 0.36 $\pm$ 0.69 & \textbf{0.54 $\pm$ 0.67} \\
 & 500 & 0.00 $\pm$ 0.03 & \textbf{0.01 $\pm$ 0.05} \\
\cline{1-4}
\multirow[t]{3}{*}{BIC} & 100 & 0.03 $\pm$ 0.20& \textbf{0.13 $\pm$ 0.29} \\
 & 200 & 0.00 $\pm$ 0.04& \textbf{0.01 $\pm$ 0.05} \\
 & 500 & -0.01 $\pm$ 0.02 & -0.01 $\pm$ 0.02 \\

\cline{1-4}
\bottomrule
\end{tabular}

    \label{tab:diff_table_r2}
\end{table}

\begin{table}[]
\renewcommand{\arraystretch}{1.25}
\caption{Model complexity differences ($\Delta_a$) relative to the Levenberg-Marquardt baseline (Mean $\pm$ Std).}
    \centering
    \begin{tabular}{llcc}
\toprule
 & Algorithm & BFGS & DSCA (our) \\
Criterion & $N$ samples &  &  \\
\midrule
\multirow[t]{3}{*}{NLL} & 100 & -0.2 $\pm$ 1.1 & \textbf{-15.8 $\pm$ 10.8} \\
 & 200 & -0.6 $\pm$ 2.3 & \textbf{-7.5 $\pm$ 6.2} \\
 & 500 & -1.2 $\pm$ 2.9 & \textbf{-4.0 $\pm$ 2.8} \\
\cline{1-4}
\multirow[t]{3}{*}{AIC} & 100 & -0.8 $\pm$ 2.8 & \textbf{-23.3 $\pm$ 11.4} \\
 & 200 & -2.1 $\pm$ 4.9 & \textbf{-15.1 $\pm$ 13.3} \\
 & 500 & 0.0 $\pm$ 2.5 & \textbf{-7.6 $\pm$ 5.9} \\
\cline{1-4}
\multirow[t]{3}{*}{BIC} & 100 & -1.9 $\pm$ 4.0 & \textbf{-9.3 $\pm$ 7.1} \\
 & 200 & -1.4 $\pm$ 5.9 & \textbf{-7.4 $\pm$ 8.3} \\
 & 500 & 0.1 $\pm$ 5.5 & \textbf{-3.0 $\pm$ 7.1} \\
\cline{1-4}

\bottomrule
\end{tabular}

    \label{tab:diff_table_comp}
\end{table}

BFGS returns solutions that are remarkably similar to LM: its accuracy gains are limited to the unregularized and weakly penalized criteria in the smaller samples, and the expressions it retains are essentially of the same size as the baseline throughout. DSCA instead improves on LM along both axes at once. It is the more accurate of the two methods in every configuration in which a difference is present, with gaps that are practically relevant under NLL and  AIC at 100 and 200 samples, and it does so while removing a substantial number of nodes from the final equations, up to roughly 24 in the lowest sample regime. The size reduction is the more systematic of the two effects, as it persists across all three criteria and at every sample size, whereas the accuracy advantage concentrates where the parameter estimates are least constrained by the data. As in the previous analyses, the differences narrow at 500 samples and under BIC, where all methods essentially agree in accuracy and the remaining
advantage of DSCA is one of parsimony.

\subsubsection{Computational Efficiency}
Training time is an important secondary metric for SR. Since DSCA performs $K$ independent inner optimizations per candidate, it is expected to be more computationally intensive. Figure \ref{fig:time_plot} reports the training time of all evaluated algorithms\footnote{All experiments were run on a SLURM-managed cluster whose nodes have two Intel Xeon Gold 5320 processors (2.2 GHz, 52 cores in total) and 512 GB of shared memory. To ensure a fair comparison, every job was allocated four CPU cores and 16 GiB of RAM.}.

Despite the additional local optimizations, DSCA remains competitive with, or faster than, the centralized baselines in the lower-sample regimes. With 100 samples, it runs up to 20\% faster than LM and BFGS under NLL and AIC. With 200 samples, the differences are mixed, ranging from about 9\% faster than BFGS under AIC to 12--20\% slower depending on the criterion. Only with 500 samples does the overhead of the inner loops fully materialize, with DSCA running 15--26\% slower than the baselines across all criteria.

This non-linear scaling behavior occurs because numerical optimization is only one component of the SR pipeline, whose runtime is dominated by other operations such as symbolic simplification and offspring generation. As these operations are faster on smaller expressions, the simpler populations maintained by DSCA accelerate the overall evolutionary process and, in low-sample regimes, fully offset the cost of the inner optimization loops.

\begin{figure*}
    \centering
    \includegraphics[width=0.8\linewidth]{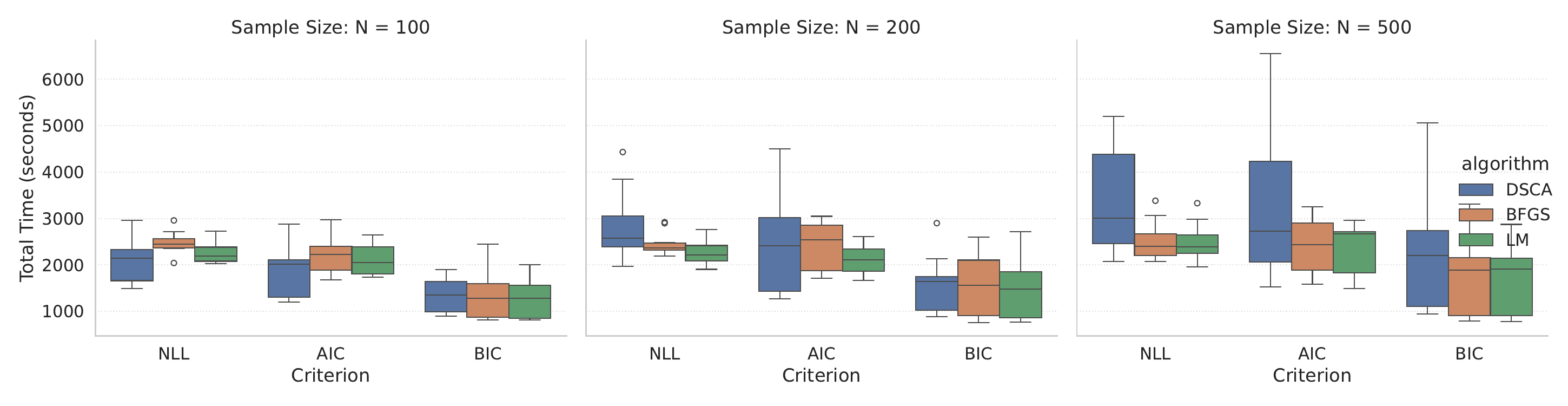}
    \caption{Algorithmic training time (in seconds) across different selection criteria and sample size regimes.}
    \label{fig:time_plot}
\end{figure*}

\subsection{Mechanism Analysis: Selection-Relevant Excess and Generalization}
\label{exp:mechanism}

The preceding experiments show that DSCA tends to favor smaller expressions while retaining competitive predictive accuracy, particularly in low-sample regimes. We now examine whether this behavior is consistent with the mechanism characterized in Section~\ref{sec:theory}, namely the selection-relevant excess loss $\Delta_{N'}$ defined in \eqref{eq:delta}. The objective is not to establish a causal identification of the mechanism, but to test two empirical implications of the theory: whether the structures favored by DSCA exhibit smaller out-of-sample deterioration under a common centralized calibration, and whether $\Delta_{N'}$ is associated with that deterioration across structures.

We pool the final populations produced by DSCA, LM, and BFGS across all real-world datasets, all three selection criteria, and all ten seeds in the $N=200$ regime, and remove duplicate structures. Each surviving structure is then calibrated twice on the same optimization data: once centrally with BFGS, yielding $\hat\theta_{N'}$ and the centralized training loss $L_{N'}(\hat\theta_{N'})$, and once with DSCA, yielding $\bar\theta$ and $L_{N'}(\bar\theta)$. Their difference is $\Delta_{N'}$. Since each structure is scored by the same two procedures regardless of its origin, this design separates the properties of the structure from the properties of the optimizer that produced it, and lets us ask whether $\Delta_{N'}$ is informative about structures that DSCA itself never selected.

Because the pooled datasets differ in target scale, all MSE quantities are divided by the median training MSE of all models on the same dataset, so that $\Delta_{N'}$ and the generalization gap are expressed relative to the typical error level of each dataset.

\begin{figure*}
    \centering
    \includegraphics[width=0.7\linewidth]{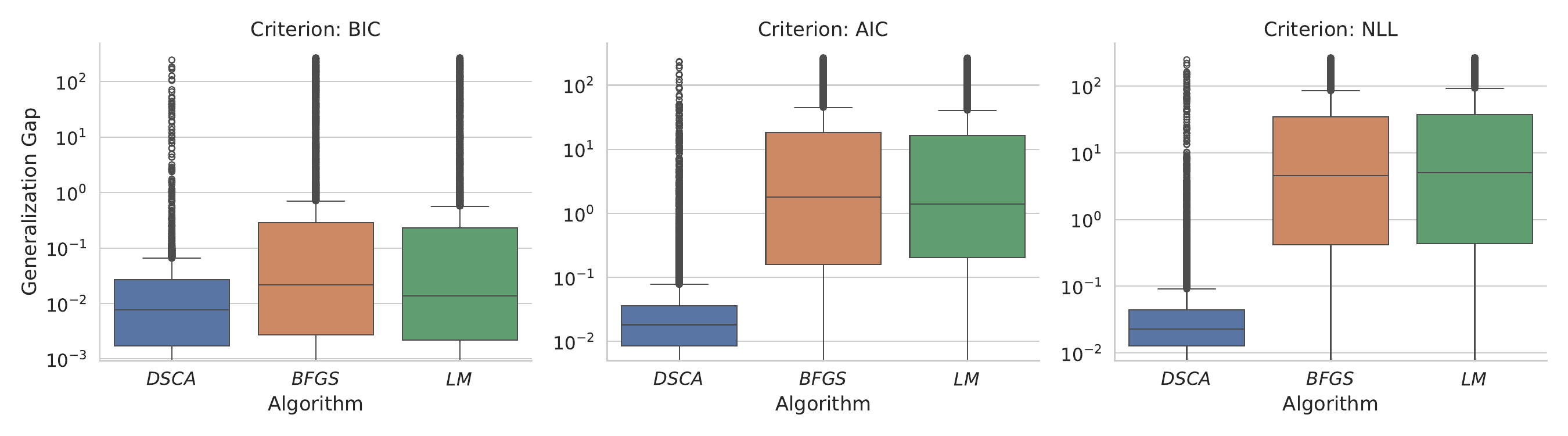}
    \caption{Generalization gap of the pooled final populations, by the optimizer that produced
each structure (200-sample regime, centralized re-fit).}
    \label{fig:mse_train_lm}
\end{figure*}

\subsubsection{Structures found under centralized optimization generalize worse} Figure~\ref{fig:mse_train_lm} reports the generalization gap of the pooled structures after all of them have been recalibrated centrally. This common refit is important: differences in the gap then reflect the structures retained by the different evolutionary procedures rather than the particular coefficient values produced during their original runs. Across the three outer selection criteria, structures originating from centralized LM and BFGS runs exhibit substantially larger median generalization gaps and heavier upper tails than structures originating from DSCA. Combined with the predictive-accuracy results of Section~\ref{sec:results}-B, this suggests that DSCA tends to retain structures whose fitted performance is less specific to the training sample, rather than merely producing structures with uniformly poorer in-sample fit.

\subsubsection{The excess loss is a signal of overfitting} Figure~\ref{fig:mse_test_dsca} relates $\Delta_{N'}$ to the generalization gap over the same pooled set. The association is strong and stable across criteria, with Pearson correlations between $0.72$ and $0.83$, and it holds within each generating algorithm, not only across them: structures produced by LM with a large $\Delta_{N'}$ generalize as poorly as structures produced by any other optimizer with a comparable $\Delta_{N'}$. This is the property the mechanism requires. $\Delta_{N'}$ is computed without access to test data, from the training sample alone, and is available at no additional cost during the search, since DSCA uses $L_{N'}(\bar\theta)$ as its fitness in any case.

\begin{figure*}
    \centering
    \includegraphics[width=0.9\linewidth]{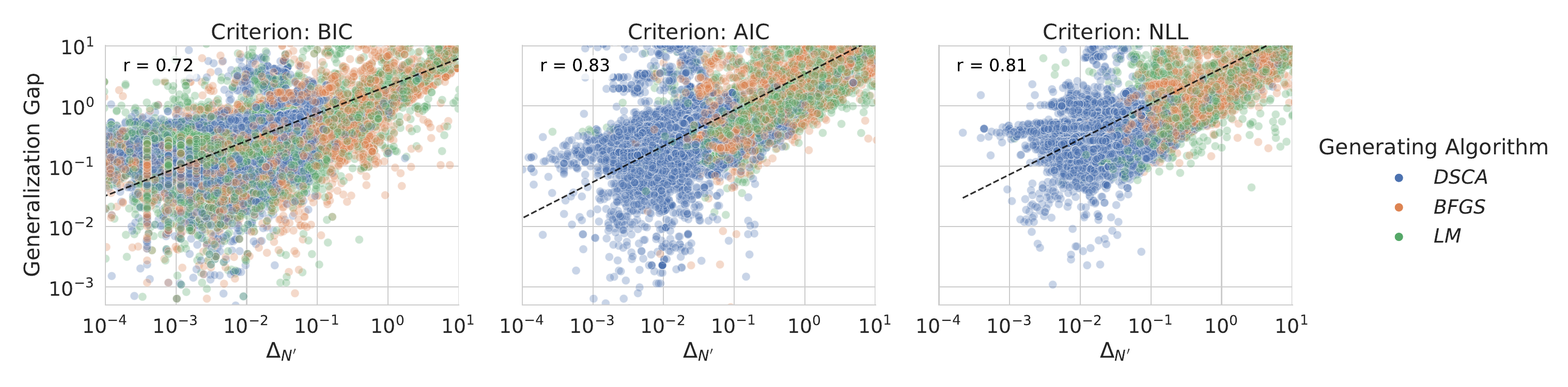}
    \caption{Excess loss $\Delta_{N'}$ against the generalization gap for the pooled final
populations, colored by the optimizer that produced the structure.}
    
    \label{fig:mse_test_dsca}
\end{figure*}

\subsection{Mechanism Controls: Optimization Effort and Partition Heterogeneity}
\label{exp:ablation}

The mechanism analyzed in Section~\ref{sec:theory} attributes the behavior of
DSCA to the averaging of local optima obtained on partitions drawn from environments that differ in their covariate distribution. 
Two competing explanations must be excluded. First, the advantage might arise simply because DSCA performs multiple optimization attempts per candidate rather than a single centralized fit. Second, partitioning itself might be sufficient, irrespective of whether the subsets exhibit the controlled covariate heterogeneity targeted by DSCA.

We therefore compare four variants under the 200-sample regime. \emph{Multistart} performs $K$ independent runs of the base optimizer on the full training set from random starting points and retains the best solution, so it matches the number of optimization attempts performed by DSCA without partitioning the data. $R_{\text{batch}}$ partitions the data uniformly at random. The remaining two variants use the proposed Dirichlet-Sinkhorn scheme with $\alpha=100$, which yields nearly homogeneous partitions, and with $\alpha=1$, which is the configuration used throughout the paper. Each partition-based configuration is evaluated with $K=3$ and $K=5$.

To prevent datasets of differing intrinsic difficulty from dominating the averages, we report a regret gap rather than raw scores. For a given dataset and selection criterion, the gap measures how far a configuration's $R^2$ falls below the best $R^2$ attained by any configuration, and how far its complexity rises above the smallest complexity attained by any configuration. A gap of zero denotes the best observed behavior and lower is better throughout.

Figures~\ref{fig:grid_200_r2} and~\ref{fig:grid_200_comp} report the two gaps
averaged over the real-world datasets (200 samples). On predictive accuracy the multi-start
variant is the strongest or tied-strongest configuration under every criterion,
consistent with the advantage of selecting the best solution among several
independent optimizations of the same pooled objective; unlike DSCA, Multistart
does not intentionally modify the parameter vector at which the candidate is
evaluated. The partition-based variants give up a small amount of $R^2$ relative
to it, most visibly at $K=3$ under the unregularized criteria, and raising $K$
to $5$ removes most of this deficit.

The ordering reverses on model complexity, and this is where the specific
contribution of the proposed scheme becomes visible. Multi-start generally returns by far
the largest expressions, and increasing the number of restarts does not reduce
their size. This is the expected consequence of a procedure that improves the fit
of every candidate uniformly, including the overparameterized ones, and it
confirms that the parsimony observed in Section~\ref{sec:realworld} cannot be
attributed to the additional optimization effort alone. Both partitioning
strategies produce markedly smaller models, and among them the heterogeneous
stratification is consistently the most parsimonious at $K=5$, ahead of both
random partitioning and the homogeneous split. The three strategies are much
closer at $K=3$ under NLL, which suggests that few partitions provide too little
contrast between local optima for the averaging penalty to discriminate.

The effect of $K$ is asymmetric across the two axes. Moving from three to five
partitions leaves the accuracy gap essentially unchanged, or slightly improves it, while substantially reducing the complexity gap for the partition-based variants; the same pattern is not observed for Multistart.  Taken together, the two figures support the reading advanced in Section~\ref{sec:theory}: a larger number of narrower environments increases the dispersion of the local optima and hence the loss incurred by averaging, and this penalty falls selectively on candidates whose parameters are unstable across environments. The configuration used in the main experiments, $\alpha=1$ with $K=5$, gives up a small amount of $R^2$ relative to a pure restart strategy in exchange for a reduction of roughly 2 to 15 nodes.

\begin{figure}
    \centering
    \includegraphics[width=\linewidth]{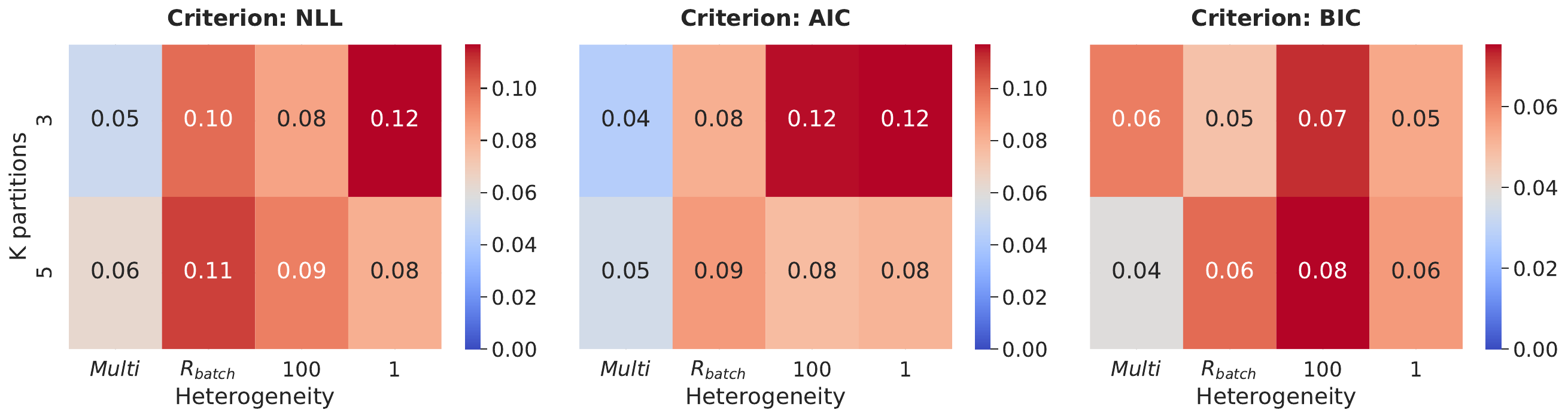}
    \caption{ Ablation study of the predictive accuracy ($R^2$) regret gap under different partitioning strategies.}
    \label{fig:grid_200_r2}
\end{figure}

\begin{figure}
    \centering
    \includegraphics[width=\linewidth]{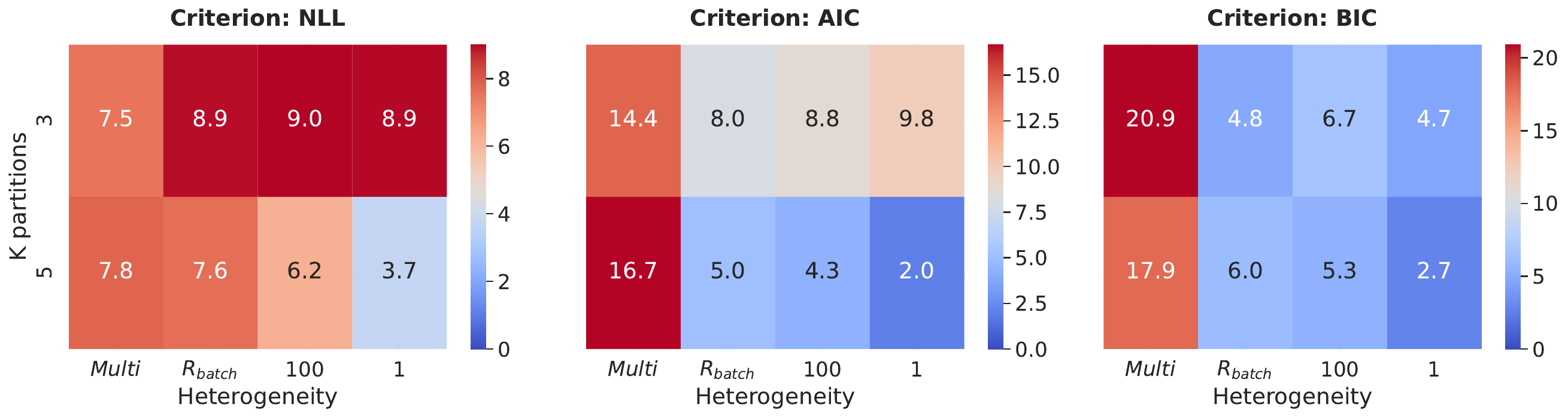}
    \caption{Ablation study of the model complexity regret gap under different partitioning strategies.}

    \label{fig:grid_200_comp}
\end{figure}

\section{Conclusions}\label{sec:conclusions}

This paper introduced Dirichlet-Sinkhorn Constant Averaging, a coefficient-calibration strategy for genetic programming-based SR that repurposes distributed parameter estimation as a source of selection pressure during evolutionary search. DSCA calibrates each candidate independently on subsets whose covariate distributions differ by a controlled amount, and evaluates it using the average of the resulting parameter vectors. Rather than improving the standard local search step, DSCA deliberately changes the parameterization at which candidate structures are compared during evolution.
The analysis identifies the excess loss $\Delta_{N'}$ relative to centralized calibration as the quantity through which DSCA changes selection. Its population counterpart vanishes for correctly specified and identifiable expressions, whereas misspecified candidates can incur a persistent penalty when their environment-specific optima do not aggregate to the pooled optimum.

The experiments are consistent with this interpretation. DSCA achieves higher functional-recovery rates than centralized BFGS and Levenberg-Marquardt on the synthetic benchmarks and a more favorable accuracy-complexity trade-off on the real-world datasets. Its advantage is clearest in low-sample regimes, where sample-specific calibration and overfitting are most likely, and narrows as the sample size grows. After a common centralized re-fit, structures retained by DSCA exhibit smaller generalization gaps, and $\Delta_{N'}$ is positively associated with this gap across structures and algorithms of origin. Finally, multiple centralized restarts do not reproduce DSCA's reduction in expression size, while score-based heterogeneous partitioning yields the most parsimonious models at $K=5$. Together, these results support the role of controlled covariate heterogeneity rather than additional optimization effort alone.

Two directions appear particularly important for future work. First, symbolic simplification removes exact algebraic redundancies but does not prevent weakly identified or near-singular parameterizations. Structural repair operators that eliminate such degeneracies before calibration could make parameter averaging more reliable. Second, the interaction between heterogeneous calibration and outer selection criteria such as AIC and BIC is explored only empirically here. Characterizing how they jointly shape evolutionary pressure remains an important direction for further study.



\bibliographystyle{IEEEtran}
\bibliography{IEEEabrv,references}

\appendices

\section{Theoretical Analysis of Averaging Under Heterogeneity} 
\label{app:theory}

Throughout this appendix, $N$ denotes the size of the optimization set $\mathcal{D}_{\mathrm{opt}}$, denoted $N'$ in the main text, and $n=N/K$. The distinctive effect of DSCA arises from the interaction between heterogeneous partitions and parameter averaging. For a fixed candidate expression, DSCA optimizes its constants independently on $K$ partitions, averages the resulting parameter estimates, and evaluates the averaged parameter vector on the complete training set. This procedure can increase the loss assigned to candidates whose preferred parameters vary across partitions. Two quantities are useful for understanding this effect. The first measures disagreement among locally optimal solutions and explains the mechanism induced by averaging. The second measures the actual increase in loss relative to centralized constant optimization and is therefore the quantity directly relevant to evolutionary selection. 

\medskip \emph{Empirical decomposition.} 
For a candidate expression $f(\cdot,\theta)$, let \begin{equation} 
L_k(\theta):=\frac{1}{n}\sum_{i\in\mathcal D_k}\ell(z_i,\theta), \qquad L_N(\theta):=\frac{1}{N}\sum_{i=1}^N\ell(z_i,\theta), 
\end{equation} 
where $z_i=(x_i,y_i)$ and $n=N/K$. Because the $K$ partitions are disjoint and equally sized, 
\begin{equation}
\label{eq:loss-mixture} 
L_N(\theta)=\frac{1}{K}\sum_{k=1}^K L_k(\theta) 
\end{equation} 
holds exactly for every $\theta$. 
The equal-size constraint is enforced by the Sinkhorn scaling; with unequal partitions the same identity holds with weights $n_k/N$. 
Let 
\begin{equation} 
\begin{array}{cc} 
\hat\theta_k:=\arg\min_\theta L_k(\theta), \qquad \bar\theta:=\frac{1}{K}\sum_{k=1}^K\hat\theta_k,\\[15pt] 
\hat\theta_N:=\arg\min_\theta L_N(\theta). 
\end{array} 
\end{equation} 
Evaluating \eqref{eq:loss-mixture} at $\bar\theta$ gives \begin{equation}
\label{eq:averaging-gap} 
\begin{array}{cc} 
L_N(\bar\theta) =\frac{1}{K}\sum_{k=1}^K L_k(\hat\theta_k)+D_N,\\[15pt] 
D_N:=\frac{1}{K}\sum_{k=1}^K \bigl[L_k(\bar\theta)-L_k(\hat\theta_k)\bigr]. 
\end{array} 
\end{equation} 
If each $\hat\theta_k$ is a global minimizer of its local objective, then $D_N\geq0$. The quantity that measures the loss added by DSCA relative to centralized parameter optimization is 
\begin{equation}
\label{eq:empirical-delta} \Delta_N:=L_N(\bar\theta)-L_N(\hat\theta_N)\geq0. 
\end{equation}

$\Delta_N$ is an analytical quantity rather than an additional component of the algorithm: DSCA assigns the fitness $L_N(\bar\theta)$, whereas $\Delta_N$ measures its excess over the loss obtained by centrally calibrating the same structure. Define also 
\begin{equation} 
F_N:=L_N(\hat\theta_N)-\frac{1}{K}\sum_{k=1}^K L_k(\hat\theta_k). 
\end{equation} 
Then 
\begin{equation}
\label{eq:empirical-decomposition} \boxed{\;D_N=\Delta_N+F_N\;} 
\end{equation} 
Under exact minimization, $D_N$, $\Delta_N$, and $F_N$ are non-negative. The decomposition distinguishes disagreement among local empirical optima from the component that actually changes the loss relative to centralized calibration. In particular, $D_N>0$ does not imply $\Delta_N>0$.

\medskip \emph{Covariate-induced environments.} Let $P$ denote the
data-generating law and let $u=g(x)$ be the stratification score of
Section~\ref{sec:methodology}, with $g$ a function of the covariates alone, so
that $u$ is measurable with respect to $\sigma(x)$. At the population level, let
$\pi_k(u)$ denote the probability of assigning an observation with score $u$ to
partition $k$. The balancing constraints imply
\begin{equation}\label{eq:marginals}
\sum_{k=1}^K\pi_k(u)=1, \qquad \mathbb E_P[\pi_k(g(X))]=\frac{1}{K}.
\end{equation}
Environment $k$ is the law $P_k$ from which $\mathcal D_k$ is drawn, with density
\begin{equation}\label{eq:population-tilt}
p_k(x,y)=K\,\pi_k(g(x))\,p(x,y).
\end{equation}
Consequently, 
\begin{equation}
\label{eq:population-mixture} 
\frac{1}{K}\sum_{k=1}^K p_k(x,y)=p(x,y), \qquad p_k(y\mid x)=p(y\mid x). 
\end{equation} 
Thus the induced environments differ in their covariate marginals while preserving the conditional data-generating mechanism.

\medskip \emph{Population quantities.} 
Define 
\begin{equation} 
\begin{array}{cc}
\Lambda_k(\theta):=\mathbb E_{P_k}[\ell(Z,\theta)],\\[15pt]
\Lambda(\theta):=\mathbb E_P[\ell(Z,\theta)] =\frac{1}{K}\sum_{k=1}^K\Lambda_k(\theta), 
\end{array}
\end{equation} 
the last equality following from \eqref{eq:population-mixture}. 
Let \begin{equation}
\begin{array}{ccc}
\theta_k^*:=\arg\min_\theta\Lambda_k(\theta), \\[10pt]
\bar\theta^*:=\frac{1}{K}\sum_{k=1}^K\theta_k^*,\\[10pt]
\theta^*:=\arg\min_\theta\Lambda(\theta). 
\end{array}
\end{equation} 
Define the selection-relevant population excess \begin{equation}
\label{eq:population-delta} 
\Delta:=\Lambda(\bar\theta^*)-\Lambda(\theta^*)\geq0. 
\end{equation} 
This is the population counterpart of $\Delta_N$ and is the quantity whose behavior is characterized below. 

\medskip 
\begin{proposition}[Correct specification under covariate reweighting]
\label{prop:correct_app}
Suppose that the partitioning score is a function of the covariates only,
$u=g(x)$, and that there exists an identifiable $\theta^*$ satisfying
\[
    f(X,\theta^*)=\mathbb{E}[Y\mid X]
\]
$P$-almost surely. If $\pi_k(g(x))>0$ for $P$-almost every $x$ and every $k$,
then $\theta_k^*=\theta^*$ for every $k$, and consequently $\Delta=0$.
\end{proposition}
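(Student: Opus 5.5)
The plan is to work with the squared-error loss $\ell(z,\theta)=(y-f(x,\theta))^2$ and use the bias--variance decomposition inside each environment. The conditional invariance $p_k(y\mid x)=p(y\mid x)$ from \eqref{eq:population-mixture} lets us write, for every $k$ and every $\theta$,
\[
\Lambda_k(\theta)=\mathbb E_{P_k}\!\left[\mathrm{Var}(Y\mid X)\right]+\mathbb E_{P_k}\!\left[\bigl(m(X)-f(X,\theta)\bigr)^2\right],
\]
where $m(X)=\mathbb E[Y\mid X]$. The key point is that both $m$ and the conditional variance are the same under $P_k$ as under $P$, because only the covariate marginal is tilted by $K\pi_k(g(x))$. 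The cross term vanishes by conditioning on $X$, and this step uses the conditional law of $P_k$, which is why $g$ must depend on $x$ only. For this decomposition to make sense we assume finite second moments, namely $\mathbb E_P[Y^2]<\infty$ and $\mathbb E_P[f(X,\theta)^2]<\infty$. Since $\pi_k\le 1$, these moments are then also finite under $P_k$.

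Next, the first term does not depend on $\theta$, and the second term is nonnegative. The second term is zero at $\theta^*$, since $f(X,\theta^*)=m(X)$ holds $P$-a.s., hence $P_k$-a.s., because $P_k\ll P$. So $\theta^*$ minimizes every $\Lambda_k$.

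Uniqueness is the step that needs care, and I expect it to be the main obstacle. Suppose $\theta$ also minimizes $\Lambda_k$. Then $f(X,\theta)=m(X)$ must hold $P_k$-a.s. I will read identifiability as: $f(X,\theta)=f(X,\theta^*)$ $P$-a.s.\ implies $\theta=\theta^*$. To apply this, I need to transfer the $P_k$-null set back to a $P$-null set. This is exactly where the hypothesis $\pi_k(g(x))>0$ for $P$-a.e.\ $x$ enters. The density $K\pi_k(g(x))$ of the $x$-marginal of $P_k$ with respect to that of $P$ is then positive $P$-a.e., so $P$ and $P_k$ are mutually absolutely continuous on $\sigma(X)$. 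Hence $f(X,\theta)=f(X,\theta^*)$ $P$-a.s., and identifiability gives $\theta=\theta^*$. This yields $\theta_k^*=\theta^*$ as the unique minimizer for each $k$. The same argument with $P$ itself shows that $\theta^*$ is the unique minimizer of $\Lambda$, consistent with the definition of $\theta^*$ via the argmin.

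Finally, averaging gives $\bar\theta^*=K^{-1}\sum_k\theta^*=\theta^*$, and therefore $\Delta=\Lambda(\bar\theta^*)-\Lambda(\theta^*)=0$ by \eqref{eq:population-delta}.
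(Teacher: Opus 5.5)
Your proof is correct and follows the paper's argument: the paper splits each $\Lambda_k$ into a $\theta$-independent term plus a nonnegative discrepancy (an expected conditional KL, with your bias--variance decomposition given as the squared-error special case), shows the discrepancy vanishes at $\theta^*$ because $P_k\ll P$, and concludes by identifiability. One difference in your favor: you explicitly use $\pi_k(g(x))>0$ to get $P\ll P_k$ on $\sigma(X)$ for the uniqueness step, a detail the paper leaves implicit when it writes ``identifiability gives $\theta_k^*=\theta^*$''.
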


\textbf{Proof.}
Because the partitioning score depends only on $x$, \eqref{eq:population-tilt} factorizes as 
\begin{equation} 
\begin{array}{ll}
p_k(x,y) = K\,\pi_k(g(x))\,p(x)\,p(y\mid x) = p_k(x)\,p(y\mid x),\\[10pt]
p_k(x):=K\,\pi_k(g(x))\,p(x), 
\end{array}
\end{equation} so $p_k(y\mid x)=p(y\mid x)$ for every environment. 
The local population risk can therefore be written as \begin{equation}
\label{eq:kl-decomposition} 
\Lambda_k(\theta) = \mathbb E_{P_k} \!\left[\mathrm H\bigl(p(\cdot\mid X)\bigr)\right] + \mathbb E_{P_k} \!\left[ \mathrm{KL}\bigl( p(\cdot\mid X)\,\|\,p_\theta(\cdot\mid X) \bigr) \right]. 
\end{equation} 
The first term is independent of $\theta$, so all $\theta$-dependence of every local objective is carried by 
\[ g_\theta(x) := \mathrm{KL}\bigl( p(\cdot\mid x)\,\|\,p_\theta(\cdot\mid x) \bigr) \geq0. \] 
Under correct specification, $g_{\theta^*}(x)=0$ for $P$-almost every $x$. Since $p_k$ is obtained from $p$ through the non-negative bounded weight $K\pi_k(g(x))\in[0,K]$, every $P$-null set is also $P_k$-null and 
\begin{equation} 
\mathbb E_{P_k}[g_{\theta^*}(X)] = 0 \leq \mathbb E_{P_k}[g_\theta(X)] \qquad \forall\theta,\ \forall k. 
\end{equation} 
Hence $\theta^*$ minimizes every local population risk. Identifiability gives $\theta_k^*=\theta^*$ for every $k$, and therefore $\bar\theta^*=\theta^*$ and $\Delta=0$.\\ \hfill$\square$

\medskip \emph{Relation with squared-error fitting.}  Under squared-error loss,
\begin{equation} 
\begin{array}{lll}
\Lambda_k(\theta) = &\underbrace{ \mathbb E_{P_k}\bigl[\mathrm{Var}(Y\mid X)\bigr] }_{\text{independent of }\theta} \\[20pt]
&+ \mathbb E_{P_k} \!\left[ \bigl(\mathbb E[Y\mid X]-f(X,\theta)\bigr)^2 \right]. 
\end{array}
\end{equation} 
The first term is independent of $\theta$. Proposition~\ref{prop:correct_app} therefore extends directly to squared-error calibration whenever the conditional mean is correctly specified; correct specification of the complete conditional noise distribution is not required. 
\medskip 

\emph{Misspecification and selection-relevant instability.} For a misspecified candidate, the local pseudo-true parameter is \begin{equation} \theta_k^* = \arg\min_\theta \mathbb E_{P_k} \!\left[ \mathrm{KL}\bigl( p(\cdot\mid X)\,\|\,p_\theta(\cdot\mid X) \bigr) \right], \end{equation} a weighted KL projection. If improving the approximation in one region of covariate space necessarily worsens it in another, the optimal resolution of that trade-off depends on the weights assigned to those regions, and the local pseudo-true parameters may therefore differ. Misspecification alone is insufficient for a population-level DSCA effect. A misspecified structure may share the same pseudo-true optimum across the induced environments, and even when the $\theta_k^*$ differ, their arithmetic mean may still coincide with the pooled pseudo-true parameter. The selection-relevant condition is therefore 
\begin{equation}
\label{eq:selection-condition} 
\bar\theta^*\neq\theta^*. 
\end{equation} 
\medskip 

\emph{Local quadratic interpretation.} Suppose that, near the relevant optima, 
\begin{equation} 
\Lambda_k(\theta) = c_k + \frac{1}{2} (\theta-\theta_k^*)^\top H_k (\theta-\theta_k^*), \qquad H_k\succ0. 
\end{equation} 
Let 
\[ \bar H:=\frac{1}{K}\sum_{k=1}^K H_k, \qquad d_k:=\theta_k^*-\bar\theta^*. \] 
Since $\sum_k d_k=0$, the pooled first-order condition gives 
\begin{equation}
\label{eq:curvature-optimum} 
\bar\theta^*-\theta^* = -\bar H^{-1} \frac{1}{K} \sum_{k=1}^K (H_k-\bar H)d_k. 
\end{equation} 
Consequently, 
\begin{equation}
\label{eq:quadratic-delta_app} 
\Delta = \frac{1}{2} \left\| \frac{1}{K} \sum_{k=1}^K (H_k-\bar H)d_k \right\|_{\bar H^{-1}}^2. 
\end{equation} 
Thus disagreement among the preferred local optima is insufficient for a selection-relevant population penalty. In the quadratic approximation, $\Delta$ arises from the interaction between their displacement and differences in local curvature. If all $H_k$ coincide, then $\bar\theta^*=\theta^*$ and $\Delta=0$ even when the local optima differ substantially. For non-quadratic risks, higher-order differences in local objective shape can play an analogous role.

\medskip \textbf{Homogeneous-limit behavior.} 
As $\alpha\to\infty$, the allocation probabilities converge to $\pi_k(u)=1/K$, so $P_k\to P$ for every $k$ and consequently $\theta_k^*\to\theta^*$ and $\Delta\to0$. To characterize the rate, write 
\[ 
\pi_k(u)=K^{-1}+\varepsilon_k(u), \qquad \sum_k\varepsilon_k(u)\equiv0, 
\] 
and let $r_\alpha$ denote the typical size of the perturbations. Then 
\[ 
\begin{array}{lll}
\Lambda_k = \Lambda+K\,G_k, \\[10pt] G_k(\theta) := \mathbb E_P[ \varepsilon_k(g(X))\ell(Z,\theta) ], \\[10pt]
\sum_kG_k\equiv0. 
\end{array}
\] 
Let 
\[ 
H:=\nabla^2\Lambda(\theta^*) 
\] 
and assume $H\succ0$. Under smoothness, perturbing the stationarity condition $\nabla\Lambda_k(\theta_k^*)=0$ gives \begin{equation}\label{eq:local-alpha-rate} \theta_k^*-\theta^* = -KH^{-1}\nabla G_k(\theta^*) + O(r_\alpha^2) = O(r_\alpha). \end{equation} Because \[ \sum_{k=1}^K\nabla G_k(\theta^*) = \nabla \left( \sum_{k=1}^K G_k \right)(\theta^*) = 0, \] the first-order perturbations cancel when the local optima are arithmetically averaged: \begin{equation}\label{eq:alpha-cancellation} \bar\theta^*-\theta^* = O(r_\alpha^2), \qquad \Delta = O(r_\alpha^4). \end{equation} Before Sinkhorn balancing, a component of a symmetric Dirichlet vector has variance \begin{equation} \operatorname{Var}[M_{s,k}] = \frac{K-1}{K^2(K\alpha+1)}. \end{equation} The final allocation probabilities after Sinkhorn balancing are not themselves Dirichlet distributed. If the balancing map is smooth in a neighborhood of the uniform allocation, the balanced perturbations retain the same asymptotic scale, giving 
\begin{equation}
\label{eq:alpha-rate} 
r_\alpha^2 = O\!\left((K\alpha+1)^{-1}\right) \qquad \text{as }\alpha\to\infty. 
\end{equation} 
Together with \eqref{eq:alpha-cancellation}, this explains the rapid attenuation of the selection-relevant population effect near homogeneous partitioning. The result is local to the regime $\alpha\to\infty$ and does not imply monotonicity of $\Delta$ in $\alpha$ or identify a universal optimal heterogeneity level.

\medskip \emph{Finite-sample connection.} Assume fixed $K$ and standard joint M-estimation regularity conditions for the balanced partition design. Each local empirical minimizer is root-$N$ consistent for its environment-specific population optimum, and therefore 
\begin{equation}
\label{eq:local-estimator-rate} 
\bar\theta-\bar\theta^* = O_p(N^{-1/2}). 
\end{equation} 
When $\Delta>0$, smoothness of the empirical and population risks then gives 
\begin{equation}
\label{eq:rate-alt} 
\Delta_N = \Delta+O_p(N^{-1/2}). 
\end{equation} 
Thus the empirical excess fluctuates around a non-vanishing population quantity. 
Under correct specification, all environments instead share the same population optimum $\theta^*$. Let 
\[ 
S_k:=\nabla L_k(\theta^*), \quad H_k:=\nabla^2\Lambda_k(\theta^*), \quad \bar H:=\frac{1}{K}\sum_{k=1}^K H_k. 
\] 
A first-order M-estimation expansion gives 
\begin{equation}
\label{eq:null-estimator-difference} 
\begin{array}{ll}
\bar\theta-\hat\theta_N &= -\frac{1}{K} \sum_{k=1}^K \left( H_k^{-1}-\bar H^{-1} \right)S_k + o_p(N^{-1/2})\\[10pt] 
&= O_p(N^{-1/2}). 
\end{array}
\end{equation}
Since $L_N$ is stationary at $\hat\theta_N$, the corresponding difference in loss is second order: 
\begin{equation} 
\Delta_N = L_N(\bar\theta)-L_N(\hat\theta_N) = O_p(N^{-1}). 
\end{equation} 
Equation~\eqref{eq:null-estimator-difference} also shows why the finite-sample discrepancy can be amplified when a local information matrix is poorly conditioned: the difference between averaged and centralized estimators involves the inverse local curvatures $H_k^{-1}$. This remains a finite-sample effect and vanishes with increasing $N$.
Combining the two regimes yields 
\begin{equation}
\label{eq:dichotomy_app} \Delta_N= 
\begin{cases} 
O_p(N^{-1}), & \text{well specified, identifiable},\\[4pt] 
\Delta+O_p(N^{-1/2}), & \text{when }\Delta>0. 
\end{cases}
\end{equation} 
The separation is between a sampling discrepancy that vanishes with sample size and a selection-relevant population effect that persists. 
\medskip

\medskip \emph{Surrogate informativeness.}
The conditional-invariance argument above requires only that the partitioning score
be a function of $x$. This condition guarantees validity of the construction, but it
does not guarantee that the induced environments are informative for DSCA.

To see what makes a surrogate informative, consider a small perturbation around the
pooled population law and write the environment weight as
\begin{equation} 
    K\pi_k(g(X)) = 1+\eta_k(X).
\end{equation}  
Let
\begin{equation} 
    q(X):=
    \mathbb{E}\!\left[
        \nabla_\theta\ell(Z,\theta^*)\mid X
    \right]
\end{equation}  
denote the conditional population loss gradient at the pooled optimum. A first-order
perturbation of the stationarity condition gives
\begin{equation} 
    \theta_k^*-\theta^*
    =
    -H^{-1}
    \mathbb{E}\!\left[\eta_k(X)q(X)\right]
    +O(\|\eta_k\|^2),
\end{equation}
where $H=\nabla^2\Lambda(\theta^*)$.
Hence the local optima move only to the extent that the covariate reweighting induced
by the surrogate is aligned with regions in which the candidate has a nonzero
conditional loss gradient.

For a correctly specified model, $q(X)=0$ pointwise, consistently with
Proposition~\ref{prop:correct_app}. For a misspecified model, a surrogate is informative
when its ordering separates regions of covariate space in which the preferred
calibration differs. Conversely, if the surrogate-induced allocation perturbations
are approximately orthogonal to $q(X)$, the first-order parameter displacement is
small and DSCA provides little additional selection pressure.

\emph{Worked examples.} Let $Y=X^2$ with $X\sim\mathrm{Unif}[0,1]$, and consider $K=2$ heterogeneous environments corresponding to $X_1=[0,1/2]$ and $X_2=(1/2,1]$. For the misspecified candidate $f(x,\theta)=\theta x$, 
\begin{equation} 
\theta_1^*=\frac{3}{8}, \quad \theta_2^*=\frac{45}{56}, \quad \bar\theta^*=\frac{33}{56}\approx0.5893, \quad \theta^*=\frac{3}{4}. 
\end{equation} 
Under squared-error loss,  
\begin{equation} 
\begin{array}{lll}
\Lambda(\theta^*) = \frac{1}{80} = 0.01250, \\[10pt]
\Lambda(\bar\theta^*) = \frac{331}{15680} \approx 0.02111,\\[10pt] \Delta = \frac{27}{3136} \approx 0.00861. 
\end{array}
\end{equation}  
Thus DSCA raises the population loss assigned to this misspecified structure from $0.01250$ to approximately $0.02111$, corresponding to $\Delta\approx0.00861$, or an increase of about $69\%$ relative to centralized calibration. Under homogeneous environments the local and pooled optima coincide and $\Delta=0$. 

The second example shows why misspecification or disagreement alone is insufficient. For the misspecified constant candidate $f(x,\theta)=\theta$ under the same two environments, 
\begin{equation} 
\theta_1^*=\frac{1}{12}, \qquad \theta_2^*=\frac{7}{12}, \qquad \bar\theta^*=\frac{1}{3}=\theta^*. 
\end{equation}  
Despite the substantial disagreement between the two local pseudo-true parameters, their arithmetic mean coincides exactly with the centralized optimum, and therefore $\Delta=0$. Here $H_1=H_2=2$, so this cancellation is precisely the behavior predicted by \eqref{eq:quadratic-delta_app}. DSCA introduces no population-level selective penalty for this candidate.

\medskip \emph{Scope and limitations.} The analysis above assumes identifiable population optima, smooth local risks, fixed $K$, and an idealized calibration step based on exact empirical minimizers. Symbolic simplification reduces exact algebraic non-identifiability before calibration, but weak identification, near-singularity, and numerical optimization error may remain in practice. Strong heterogeneity can amplify these effects by reducing the information available within individual partitions, whereas the homogeneous limit suppresses the population DSCA effect. These limitations motivate the empirical sensitivity analysis with respect to $\alpha$ and the optimization controls reported in the main text.


\section{Per-Function Results on the Synthetic Track}

Figures~\ref{fig:synth_100} and~\ref{fig:synth_200} report the success rates on each synthetic function for the 100-sample and 200-sample settings, respectively. Within each figure, the top row shows the low noise setting and the bottom row the high noise setting. Each configuration has been tested with 10 different seeds, and an equation is considered recovered if it achieves $R^2 > 0.99$ on the noiseless test set. These results underlie the win/tie/loss counts and the aggregate counts of the main text.

All evaluated methods struggle significantly with $f_3$, frequently failing to identify the ground-truth model. Success rates remain at or near 0\% across almost all configurations. The only exceptions are DSCA and BFGS reaching 10\% under BIC in the 200-sample high-noise regime. This difficulty likely stems from the sine term, which usually represents a significant challenge for SR algorithms.

For $f_1$ and $f_2$, DSCA generally outperforms BFGS and LM, particularly under the NLL and AIC selection criteria. For instance, on $f_1$ in the 200-sample low-noise regime, DSCA achieves an 80\% success rate using NLL, compared to 50\% for BFGS and 30\% for LM. Performance is instead comparable across the three optimizers on $f_4$ and $f_5$. There, success rates typically cluster between 60\% and 70\% under low noise with 200 samples, and the few losses of DSCA reported in Table~IV occur.

These comparative trends are consistent across both noise regimes. The higher noise setting yields lower overall recall, a degradation most pronounced for $f_4$ and $f_5$. For example, in the 200-sample regime, high noise drops the success rates on $f_4$ from approximately 70\% to between 0\% and 20\% for most criteria.

\begin{figure*}
    \centering
    \includegraphics[width=0.95\linewidth]{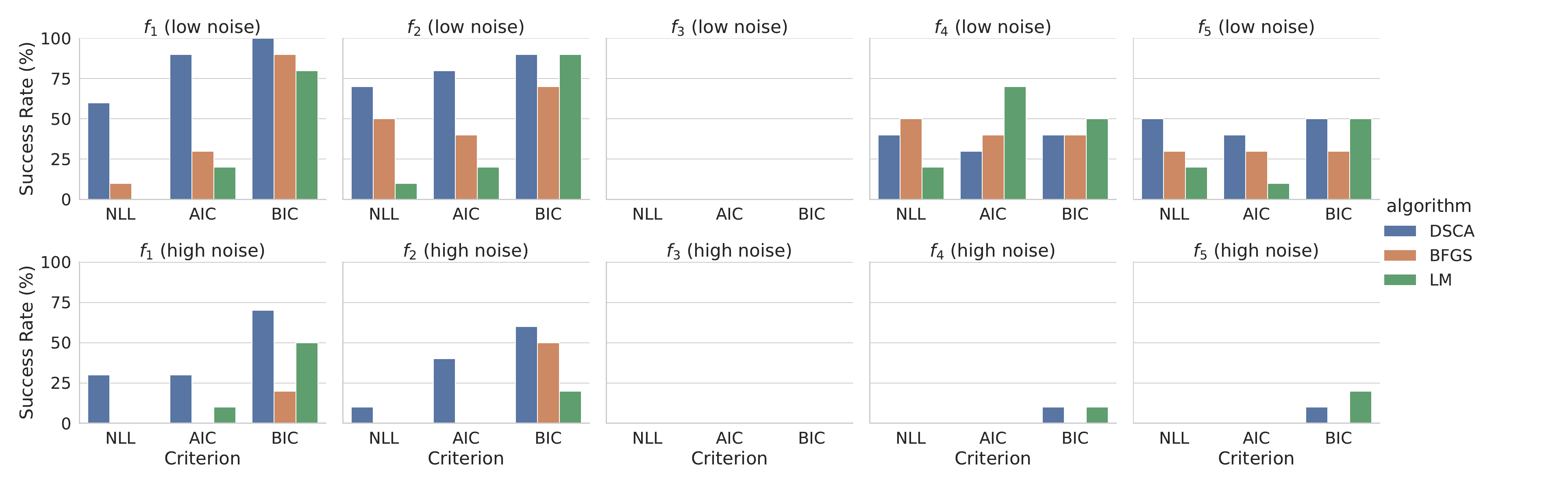}
    \caption{Success rates on synthetic datasets ($N=100$) under low and high noise regimes.}
    \label{fig:synth_100}
\end{figure*}

\begin{figure*}
    \centering
    \includegraphics[width=0.95\linewidth]{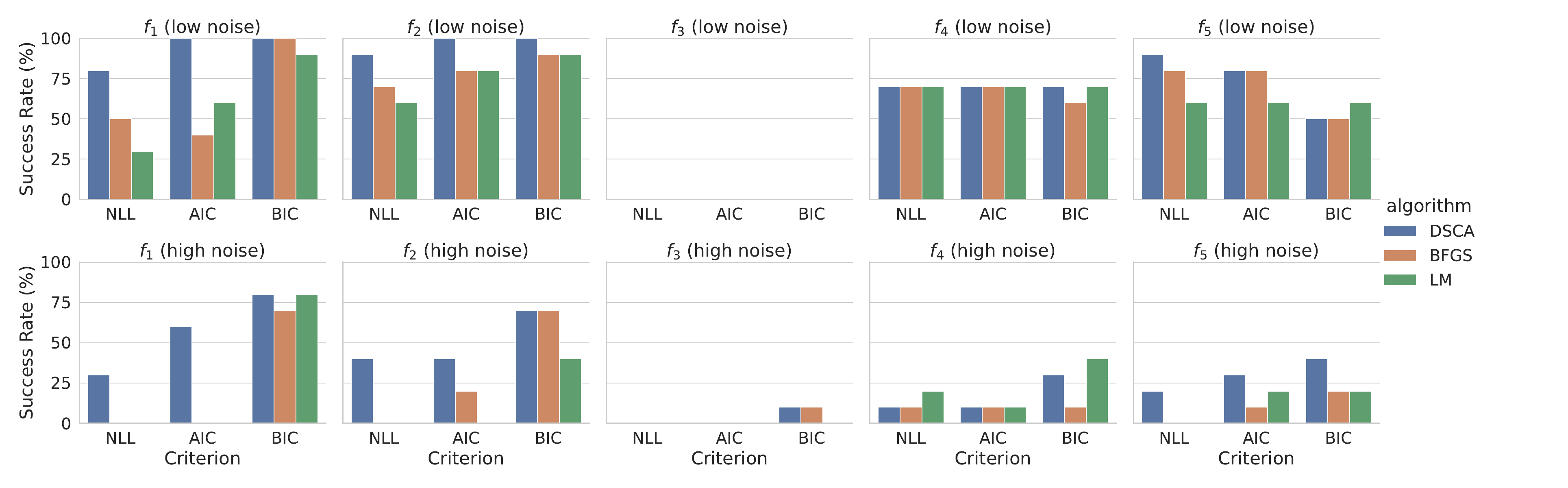}
    \caption{Success rates on synthetic datasets ($N=200$) under low and high noise regimes.}
    \label{fig:synth_200}
\end{figure*}

\end{document}